\let\epsilon\varepsilon
\let\phi\varphi
\def\X{{\cal X}}
\def\N{\mathbb N}
\def\C{\mathcal C}
\def\S{\mathcal S}
\def\E{{\bf E}}
\def\v{{v}}
\def\-as{\text{-a.s.}}
\def\P{\mathcal P}
\def\geklz{\ge_{\scriptscriptstyle KL}}
\def\getv{\ge_{\scriptscriptstyle tv}}
\begin{document}
\title{On the Relation between Realizable and Nonrealizable Cases of the Sequence Prediction Problem}
\ShortHeadings{Realizable and Nonrealizable Prediction Problems}{Daniil Ryabko}
\author{\name Daniil Ryabko \email   daniil@ryabko.net \\ \addr  INRIA Lille-Nord Europe,\\ 40, avenue Halley,\\
Parc Scientifique de la Haute Borne\\
59650 Villeneuve d'Ascq, France
}

\jmlrheading{12}{2011}{1-21}{12/10}{6/11}{Daniil Ryabko}

\firstpageno{1}

\editor{Nicol\`o Cesa-Bianchi}
\maketitle

\begin{abstract}
A sequence $x_1,\dots,x_n,\dots$ of discrete-valued observations is generated 
according to some unknown probabilistic law (measure) $\mu$. 
After observing each outcome, 
one is required to give  conditional probabilities of the next observation.
The realizable case is when the  measure  $\mu$ belongs to an arbitrary but known class $\C$  of  process measures.
The non-realizable case is when $\mu$ is completely arbitrary, but the prediction performance is measured
with respect to a given set $\C$ of process measures.
We are interested in the relations between these problems and between their solutions, as well as 
in characterizing the cases when a solution exists and finding these solutions.
We show that if the quality of prediction is measured using the total variation distance, then these problems coincide,
while if it is measured using the expected average KL divergence, then they are different. For some of the formalizations
we also show that when a solution exists, it can be obtained as a Bayes mixture over a countable subset of $\C$.
We also obtain several characterization of those sets $\C$ for which solutions to the considered problems exist.
As an illustration to the general results obtained, we show that a solution to the non-realizable case of the 
sequence prediction problem exists for the set of all finite-memory processes, but does not exist for the set
of all stationary processes. 
 It should be emphasized that the framework  is completely general: the  processes measures considered are not required to be 
 i.i.d., mixing, stationary, or to belong to any parametric family.
\end{abstract}

\begin{keywords}
  Sequence Prediction,  Time Series, Online Prediction, Realizable sequence prediction, Non-realizable sequence prediction.
\end{keywords}

\section{Introduction}
A sequence $x_1,\dots,x_n,\dots$ of discrete-valued observations (where $x_i$ belong to a finite set  $\X$) is generated 
according to some unknown probabilistic law (measure). That is, $\mu$ is a probability 
measure on the space $\Omega=(\X^\infty,\mathcal B)$ of one-way infinite sequences  (here $\mathcal B$ is the usual Borel $\sigma$-algebra).
After each new outcome $x_n$ is revealed, one is required to predict conditional {\em probabilities} of the next observation $x_{n+1}=a$, $a\in \X$, 
given the past $x_1,\dots, x_n$.
Since a predictor $\rho$  is required to give conditional probabilities $\rho(x_{n+1}=a|x_1,\dots,x_n)$ for all possible 
histories $x_1,\dots, x_n$, it defines itself a probability measure on the space  $\Omega$ of one-way infinite sequences.
In other words, a probability measure on $\Omega$ can be considered both as a data-generating mechanism and as a predictor.

Therefore, given a set $\mathcal C$ of probability measures on $\Omega$, one can ask two kinds of questions about $\mathcal C$.
First, does there exist a predictor $\rho$, whose forecast probabilities converge (in a certain sense) to the $\mu$-conditional
probabilities, if an arbitrary $\mu\in\C$ is chosen to generate the data?  Here we assume that the ``true'' measure that generates
the data belongs to the set $\C$ of interest, and would like to construct a predictor that predicts all measures in $\C$.
The second type of questions is as follows: does there exist a predictor that predicts at  least as well as any predictor $\rho\in\C$, 
if the measure that generates the data comes possibly from outside of $\C$? Thus, here we consider elements of $\C$ as predictors, 
and we would like to combine their predictive properties, if this is possible.
Note that in this setting the two questions above concern the same object: a set $\C$ of probability measures on $\Omega$. 

Each of these two questions,  the realizable and the non-realizable one,  have enjoyed much attention in the literature; 
the setting for the non-realizable case is usually slightly different, which is probably why it has not (to the best
of the author's knowledge) been studied as another facet of the realizable case. 
The realizable case traces back to Laplace, who has considered the problem of predicting outcomes of a series 
of independent tosses of a biased coin. That is, he has considered the case when the set $\C$ is that of all 
i.i.d.\ process measures. Other classical examples studied are the set of all computable (or semi-computable) 
measures \citep{Solomonoff:78}, the set of $k$-order Markov and finite-memory processes (e.g., \citealp{Krichevsky:93})
and the set of all stationary processes \citep{BRyabko:88}. The general question of finding predictors for an arbitrary
given set $\C$ of process measures has been addressed in \citep{Ryabko:07pqisit, Ryabko:08pqaml, Ryabko:10pq3+}; the latter work 
shows that when a solution exists it can be obtained as a Bayes mixture over a countable subset of~$\C$.

The non-realizable case is usually studied in a slightly different, non-probabilistic, setting. We 
refer to \citep{Cesa:06} for a comprehensive overview. It is usually assumed 
that the observed sequence of outcomes is an arbitrary (deterministic) sequence; it is  required not to give
conditional probabilities, but just deterministic guesses (although these guesses can be selected using randomisation). Predictions result in a certain loss, which 
is required to be small as compared to the loss of a given set of reference predictors (experts) $\C$. The losses of the experts
and the predictor are observed after each round.
 In this approach, it is mostly assumed
that the set $\C$ is finite or countable. 
The main difference with the formulation considered in this work is that we require a predictor to give probabilities, and 
thus the loss is with respect to something never observed (probabilities, not outcomes). The loss itself is not completely 
observable in our setting.  In this sense our non-realizable
version of the problem is more  difficult. 
Assuming that the data generating mechanism is probabilistic, 
even if it is completely unknown, makes sense in such problems as, for example, game playing, or market analysis. 
In these cases one may wish to assign smaller loss to those models or experts who give probabilities closer 
to the correct ones (which are never observed), even though different probability forecasts can often result
in the same action. 
Aiming at predicting probabilities of outcomes  also allows us to abstract
from the actual use of the predictions (for example, making bets) and thus from considering losses in a general form; instead,
we can concentrate on those forms of loss  that
are more convenient for the analysis. In this latter respect, the problems we consider are easier than those considered in prediction with expert advice.
(However, in principle, nothing restricts us to considering the simple losses that we chose; they are just a convenient choice.)
Noteworthy, the probabilistic approach also  makes the machinery of probability theory applicable, hopefully making the problem easier.
A reviewer suggested the following summary explanation of the difference between the non-realizable problems of this work and 
prediction with expert advice: the latter is prequential (in the sense of \citealp{Dawid:92}), whereas the former is not.

In this work we consider two measures of the quality of prediction. The first one is the total variation distance, which 
measures the difference between the forecast and the ``true'' conditional probabilities of all future events (not just the probability of the next
outcome). The second one is expected (over the data) average (over time) Kullback-Leibler divergence. 
Requiring that  predicted and true probabilities converge in total variation is very strong; in particular, this is possible
if \citep{Blackwell:62} and only if \citep{Kalai:94} the process measure generating the data is absolutely continuous with 
respect to the predictor. The latter fact makes the sequence prediction problem relatively easy to analyse. Here  we investigate 
 what can be paralleled for the other measure of prediction quality (average KL divergence), which is much weaker, 
and thus allows for solutions for the cases of much larger sets $\C$ of process measures (considered either as predictors
or as data generating mechanisms).

Having introduced our measures of prediction quality, we can further break the non-realizable case into two problems.
The first one is as follows. Given  a set $\C$ of predictors, we want to find a predictor whose prediction error converges to zero 
if there is at least one predictor in $\C$ whose prediction error converges to zero; we call this problem simply the ``non-realizable'' case, or Problem~2 (leaving
the name ``Problem~1'' to the realizable case). 
The second non-realizable problem  is the ``fully agnostic'' problem: it is to make the prediction error asymptotically as small as that of the best 
(for the given process measure generating the data)
 predictor in $\C$ (we call this Problem~3). Thus, we now have three problems about a set of process measures $\C$ to address.

We show  that if the quality of prediction is measured in total variation, then all the three problems coincide: any 
solution to any one of them is a solution to the other two. For the case of expected average KL divergence, all the three problems
are different: the realizable  case is strictly easier than non-realizable (Problem 2), which is, in turn, strictly easier
than the fully agnostic case (Problem~3). We then analyse which results concerning prediction in total variation can be transferred to which of the problems
concerning prediction in average KL divergence. It was shown in \citep{Ryabko:10pq3+} that, for the realizable case, 
if there is a solution for a given set of process measures $\C$, then a solution can also be obtained 
as a Bayesian mixture over a countable subset of $\C$; this holds both for prediction in total variation and in 
expected average KL divergence. Here we show that this result also holds true for the (non-realizable) case of Problem~2, 
for prediction in expected average KL divergence. We do not have an analogous result for Problem~3 (and, in fact, conjecture that the opposite statement holds true).
However, for the fully agnostic case of Problem~3, we show that separability with respect to a certain 
topology given by KL divergence is a sufficient (though not  a necessary) condition for the existence of 
a predictor. This is used to demonstrate that there is a solution to this problem for the set of all finite-memory 
process measures, complementing similar results obtained earlier in different settings. On the other hand, we show that there is no solution to this problem for the set of all stationary process
measures, in contrast to a result of B.~\cite{BRyabko:88} that gives a solution to the realizable case of this problem (that is,
a predictor whose expected average KL error goes to zero if any stationary process is chosen to generate the data).
Finally, we also consider a modified version of Problem~3, in which the performance of predictors is only 
compared on individual sequences. For this problem, we obtain, using a result from \citep{BRyabko:86}, a characterisation of those sets $\C$ for which a solution exists in terms
of the Hausdorff dimension.

\section{Notation and Definitions}\label{s:pre}
Let $\X$ be a finite set. The notation $x_{1..n}$ is used for $x_1,\dots,x_n$. 
 We consider  stochastic processes (probability measures) on $\Omega:=(\X^\infty,\mathcal B)$ where $\mathcal B$
is the sigma-field generated by the cylinder sets  $[x_{1..n}]$, $x_i\in\X, n\in\N$ 
($[x_{1..n}]$ is the set of all infinite sequences that start with $x_{1..n}$).
For a  finite set $A$ denote $|A|$ its cardinality.
We use  $\E_\mu$ for
expectation with respect to a measure $\mu$.

Next we introduce the measures of the quality of prediction used in this paper.
For two measures  $\mu$ and $\rho$  we are interested in how different 
 the $\mu$- and $\rho$-conditional probabilities are, given a data sample $x_{1..n}$.
Introduce the {\em (conditional) total variation} distance 
$$
v(\mu,\rho,x_{1..n}):= \sup_{A\in\mathcal B} |\mu(A|x_{1..n})-\rho(A|x_{1..n})|,
$$
if $\mu(x_{1..n})\ne0$ and $\rho(x_{1..n})\ne0$, and $v(\mu,\rho,x_{1..n})=1$ otherwise.
\begin{definition}
We say that $\rho$ predicts $\mu$ in total variation if 
$$
v(\mu,\rho,x_{1..n})\to0\ \mu\-as
$$
\end{definition}
This convergence is rather strong. In particular, it means that $\rho$-conditional probabilities
of arbitrary far-off events converge to $\mu$-conditional probabilities. 
Moreover,  $\rho$ predicts $\mu$ in
total variation if \citep{Blackwell:62} and only if \citep{Kalai:94} $\mu$ is absolutely continuous with respect to $\rho$.
Denote $\getv$ the relation of absolute continuity (that is, $\rho\getv\mu$ if $\mu$ is absolutely continuous with respect to $\rho$).

Thus, for a class $\C$ of measures there is a predictor $\rho$ that predicts every $\mu\in\C$ in total
variation if and only if every $\mu\in\C$ has a density with respect to $\rho$.
Although such  sets of processes are rather large, they do not include even such basic 
examples as the set of all Bernoulli i.i.d.\ processes.
That is, there is no $\rho$ that would predict in total variation every Bernoulli i.i.d.\ process measure $\delta_p$, $p\in[0,1]$,
where $p$ is the probability of $0$. 
Indeed, all these processes $\delta_p$, $p\in[0,1]$, are singular with respect to one another; in particular, 
each of the non-overlapping sets $T_p$ of all sequences which have limiting fraction $p$ of 0s has probability 1 with respect to one of the measures and 
0 with respect to all others; since there are uncountably many of these measures, there is no measure $\rho$ with respect to which they all would have a density 
(since such a measure should have $\rho(T_p)>0$ for all $p$).

Therefore, perhaps for many (if not most) practical applications this measure of the quality of prediction is too strong,
and one is interested in weaker measures of performance.

For two measures $\mu$ and $\rho$ introduce the {\em expected cumulative Kullback-Leibler divergence (KL divergence)} as
\begin{equation}\label{eq:akl} 
  d_n(\mu,\rho):=  \E_\mu
  \sum_{t=1}^n  \sum_{a\in\X} \mu(x_{t}=a|x_{1..t-1}) \log \frac{\mu(x_{t}=a|x_{1..t-1})}{\rho(x_{t}=a|x_{1..t-1})},
\end{equation}
In words, we take the expected (over data) cumulative (over time) KL divergence between $\mu$- and $\rho$-conditional (on the past data) 
probability distributions of the next outcome.
\begin{definition}
We say that $\rho$ predicts $\mu$ in expected average KL divergence if 
$$
{1\over n} d_n(\mu,\rho)\to0.
$$
\end{definition}
This measure of performance is much weaker, in the sense that it requires good predictions only one step ahead, and not on every step
but only on average; also the convergence is not with probability 1 but in expectation. With prediction quality so measured, 
predictors  exist for relatively large
classes of measures; most notably, \cite{BRyabko:88} provides a predictor which predicts every stationary 
process in expected average KL divergence. 

We will use the following well-known identity  (introduced, in the context of sequence prediction,  by \citealp{BRyabko:88})
\begin{equation}\label{eq:kl}
 d_n(\mu,\rho)=-\sum_{x_{1..n}\in\X^n}\mu(x_{1..n}) \log \frac{\rho(x_{1..n})}{\mu(x_{1..n})},
\end{equation}
where on the right-hand side we have simply the KL divergence between measures $\mu$ and $\rho$ restricted to the first $n$ observations.

Thus, the results of this work will be established with respect to two very different measures of prediction quality,
one of which is very strong and the other rather weak. This suggests that the facts established reflect some fundamental
properties of the problem of prediction, rather than those pertinent to particular measures of performance. On the other hand,
it remains open to extend the results below to different measures of performance.

\begin{definition} Consider the following classes of process measures: $\mathcal P$ is the set of all process measures, 
 $\mathcal D$  is the set of all degenerate discrete process measures, $\mathcal S$ is the set of all stationary processes and
$\mathcal M_k$ is the set of all stationary measures with memory not greater than $k$ ($k$-order Markov processes, with $\mathcal M_0$ being
the set of all i.i.d.\ processes):
\begin{equation}
\mathcal D:=\left\{\mu\in\mathcal P: \exists  x\in\X^\infty \ \ \mu(x)=1\right\},
\end{equation}
\begin{equation}
\mathcal S:=\left\{\mu\in\mathcal P: \forall n,k\ge1\,\forall a_{1..n}\in\X^n\, \mu(x_{1..n}=a_{1..n})=\mu(x_{1+k..n+k}=a_{1..n})\right\}.
\end{equation}
\begin{multline}\label{eq:mk}
\mathcal M_k:=\left\{\mu\in\mathcal S:\forall n\ge k\,\forall a\in\X\, \forall a_{1..n}\in\X^n \right. \\ \left. \mu(x_{n+1}=a|x_{1..n}=a_{1..n})=\mu(x_{k+1}=a|x_{1..k}=a_{n-k+1..n})\right\}.
\end{multline}
\end{definition}
Abusing the notation, we will sometimes use elements of $\mathcal D$ and $\X^\infty$ interchangeably.
The following (simple and well-known)  statement   will be used repeatedly in the examples.
\begin{lemma}\label{th:disc}
 For every $\rho\in\mathcal P$ there exists $\mu\in\mathcal D$ such that $d_n(\mu,\rho)\ge n\log|\X|$ for all $n\in\N$.
\end{lemma}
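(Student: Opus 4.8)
The plan is to exploit the closed form \eqref{eq:kl} for $d_n$ together with the fact that a degenerate measure $\mu\in\mathcal D$ is concentrated on a single sequence. Indeed, if $\mu$ assigns probability $1$ to some fixed $x=x_1x_2\dots\in\X^\infty$, then $\mu(x_{1..n})=1$ for the length-$n$ prefix of $x$ and $\mu(z_{1..n})=0$ for every other $z_{1..n}\in\X^n$, so all but one term in the sum \eqref{eq:kl} vanish and we are left with $d_n(\mu,\rho)=-\log\rho(x_{1..n})$. Thus it suffices to construct a single sequence $x$ along which the cylinder probabilities of $\rho$ decay at least as fast as $|\X|^{-n}$, that is, $\rho(x_{1..n})\le|\X|^{-n}$ for all $n$; taking $\mu$ to be the degenerate measure on that $x$ then yields $d_n(\mu,\rho)=-\log\rho(x_{1..n})\ge n\log|\X|$, as required.

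First I would build $x$ greedily, one symbol at a time. Having fixed a prefix $x_{1..n}$ with $\rho(x_{1..n})>0$, the conditional probabilities $\rho(x_{n+1}=a\mid x_{1..n})$, $a\in\X$, are well defined and sum to $1$, so at least one symbol $a^*$ satisfies $\rho(x_{n+1}=a^*\mid x_{1..n})\le 1/|\X|$; I set $x_{n+1}:=a^*$. By the chain rule $\rho(x_{1..n+1})=\rho(x_{1..n})\rho(x_{n+1}=a^*\mid x_{1..n})\le\rho(x_{1..n})/|\X|$, and a straightforward induction starting from the empty prefix, where $\rho=1$, gives $\rho(x_{1..n})\le|\X|^{-n}$ for every $n$.

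The one point that needs care is the possibility that $\rho(x_{1..n})=0$ for some $n$, in which case the conditional distribution used in the greedy step is undefined. This case is in fact harmless: once a cylinder has $\rho$-probability zero, every extension of it also has probability zero (cylinder probabilities are non-increasing along a refining chain), so I may complete $x$ by appending arbitrary symbols and still have $\rho(x_{1..m})=0\le|\X|^{-m}$ for all $m\ge n$. Moreover, at any such $n$ the surviving term of \eqref{eq:kl} is $-\log\rho(x_{1..n})=+\infty\ge n\log|\X|$, consistent with the usual conventions for KL divergence. Hence the bound $\rho(x_{1..n})\le|\X|^{-n}$, and therefore $d_n(\mu,\rho)\ge n\log|\X|$, holds for all $n$ in every case, completing the argument. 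I do not expect any genuine obstacle here: the only subtlety is the bookkeeping around zero-probability prefixes, while the core of the argument is the elementary pigeonhole observation that among $|\X|$ conditional probabilities summing to one, the smallest is at most $1/|\X|$.
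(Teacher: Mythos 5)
Your proof is correct and is precisely the ``obvious'' argument the paper has in mind (the paper omits the proof entirely, stating it is obvious): reduce via the identity~(\ref{eq:kl}) to $d_n(\mu,\rho)=-\log\rho(x_{1..n})$ for a degenerate $\mu$, and greedily pick at each step a symbol with $\rho$-conditional probability at most $1/|\X|$. Your handling of the zero-probability prefixes is also fine, so nothing is missing.
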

\begin{proof}
 Indeed, for each $n$ we can select $\mu(x_n=a)=1$ for $a\in\X$ that minimizes $\rho(x_n=a|x_{1..n-1})$, so that $\rho(x_{1..n})\le|\X|^{-n}$.
\end{proof}

\section{Sequence Prediction Problems}\label{s:ba}
For the two notions of predictive quality introduced, we can now  state formally the sequence prediction problems.
\\\noindent{\bf Problem 1}(realizable case). 
Given a set of probability measures $\C$, find a measure $\rho$ such that $\rho$  predicts in total variation (expected average KL divergence) every $\mu\in\C$, 
if such a $\rho$ exists.

Thus, Problem~1 is about finding a predictor for the case when the process generating the data is known to belong to 
a given class $\mathcal C$. That is, the set $\C$ here is a set of measures that generate the data. Next let us formulate
the questions about $\C$ as a set of predictors.

\noindent{\bf Problem 2} (non-realizable case). Given a set of process measures (predictors) $\C$, find a process measure $\rho$ such that $\rho$ predicts in total 
variation (in expected average KL divergence) every measure $\nu\in\mathcal P$ such  that there is $\mu\in\C$ which 
predicts  (in the same sense) $\nu$.

While Problem~2 is already quite general, it does not yet address what can be called the fully agnostic case:  if nothing
at all is known about the process $\nu$ generating the data, it means that there may be no $\mu\in\C$ such that $\mu$ predicts $\nu$, 
and then, even if we have a solution $\rho$ to the Problem~2, we still do not know what the performance of $\rho$  is going
to be on the data generated by $\nu$, compared to the performance of the predictors from $\C$. To address this  fully agnostic case we have to introduce 
the notion of loss.

\begin{definition}\label{def:loss} Introduce the  almost sure total variation loss of $\rho$ with respect to~$\mu$
$$
l_{tv}(\mu,\rho):=\inf\{\alpha\in[0,1]: \limsup_{n\to\infty} v(\mu,\rho,x_{1..n})\le\alpha\ \mu\text{--a.s.}\},
$$
and the asymptotic KL loss
$$
l_{KL}(\nu,\rho):=\limsup_{n\to\infty}{1\over n}d_n(\nu,\rho).
$$
\end{definition}

We can now formulate the fully agnostic version of the sequence prediction problem.

\noindent{\bf Problem 3.} Given a set of process measures (predictors) $\C$, find a process measure $\rho$ such that $\rho$ predicts
at least as well as any $\mu$ in $\C$, if any process measure $\nu\in\mathcal P$ is chosen to generate the data: 
\begin{equation}\label{eq:p3}
l(\nu,\rho) - l(\nu,\mu)\le0 
\end{equation}
 for every $\nu\in\mathcal P$ and every $\mu\in\C$, where $l(\cdot,\cdot)$ is either $l_{tv}(\cdot,\cdot)$ or $l_{KL}(\cdot,\cdot)$.

The three problems just formulated represent different conceptual approaches to the sequence prediction problem.
Let us illustrate the difference by the following {\bf informal example}. Suppose that the set $\C$ is that of all 
(ergodic, finite-state) Markov chains. Markov chains being a familiar object in probability and statistics, 
we can  easily construct a predictor $\rho$ that predicts every $\mu\in\C$ (for example, in expected
average KL divergence, see \citealp{Krichevsky:93}). That is, if we know that the process $\mu$ generating the data is Markovian, 
we know that our predictor is going to perform well. This is the realizable case of Problem~1. In reality,  
rarely can we be sure that the Markov assumption holds true for the data at hand. We may believe, however, 
that it is still a reasonable assumption, in the sense that there is a Markovian model 
which, for our purposes (for the purposes of prediction), is a good model of the data. 
Thus we may assume that there is a Markov model (a predictor) that predicts well the process 
that we observe, and we would like to combine the predictive qualities of all these Markov models.
This is the ``non-realizable'' case of Problem~2.
Note that this problem is more difficult than the first one; in particular, a process $\nu$ generating
the data may be singular with respect to any Markov process, and still be  predicted well (in the sense of expected average
KL divergence, for example) by some of them.  Still, here we are making some assumptions about the process generating 
the data, and, if these assumptions are wrong, then we do not know anything about the performance of our predictor.
Thus, we may ultimately wish to acknowledge that we do not know anything at all about the data; we still know 
a lot about Markov processes, and we would like to use this knowledge on our data. If there is anything at all 
Markovian in it (that is, anything that can be captured by a Markov model), then we would like our predictor to use
it. In other words, we want to have  a predictor that predicts any process measure whatsoever (at least) as well as any Markov predictor.
This is the ``fully agnostic'' case of Problem~3.

Of course, Markov processes were just mentioned as an example, while in this work we are only concerned with the most general
case of arbitrary  (uncountable) sets $\C$ of process measures.

 The following statement is rather obvious.
 \begin{proposition}\label{th:1}
  Any solution to Problem~3 is a solution to Problem~2, and any solution to Problem~2 is a solution to Problem~1.
 \end{proposition}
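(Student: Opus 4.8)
The statement splits into two implications, and for each the plan is to unwind the definitions and exhibit, for the target problem, the data-generating measures it must handle as a special case (or easy consequence) of the source problem's guarantee. Throughout, the key elementary bridge is that ``$\mu$ predicts $\mu$'': since $v(\mu,\mu,x_{1..n})=0$ whenever $\mu(x_{1..n})\neq0$ (which holds $\mu$-almost surely along the observed history), and $d_n(\mu,\mu)=0$ for every $n$, each measure predicts itself in both senses, so its self-loss vanishes. This trivial remark is what links the three formulations.

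For the second implication (Problem~2 $\Rightarrow$ Problem~1), let $\rho$ solve Problem~2 and fix an arbitrary $\mu\in\C$. I would simply take $\nu:=\mu$ as the candidate data-generating measure: then $\nu\in\mathcal P$, and there is a member of $\C$, namely $\mu$ itself, that predicts $\nu$, so the Problem~2 guarantee applies and $\rho$ predicts $\nu=\mu$. Since $\mu\in\C$ was arbitrary, $\rho$ predicts every measure in $\C$ and therefore solves Problem~1. This argument is identical for total variation and for expected average KL divergence.

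For the first implication (Problem~3 $\Rightarrow$ Problem~2), let $\rho$ solve Problem~3 and let $\nu\in\mathcal P$ be any measure that is predicted by some $\mu\in\C$; I must show $\rho$ predicts $\nu$. The plan has three steps: (i) translate ``$\mu$ predicts $\nu$'' into the vanishing of the relevant loss, $l(\nu,\mu)=0$; (ii) invoke the Problem~3 inequality $l(\nu,\rho)\le l(\nu,\mu)=0$ together with nonnegativity of the loss to get $l(\nu,\rho)=0$; and (iii) translate $l(\nu,\rho)=0$ back into the assertion that $\rho$ predicts $\nu$. Step (i) is immediate from the definitions in both cases. In the KL case step (iii) is also immediate, since $l_{KL}(\nu,\rho)=\limsup_n\frac1n d_n(\nu,\rho)=0$ together with $d_n\ge0$ forces $\frac1n d_n(\nu,\rho)\to0$.

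The one place that needs a genuine (if small) argument is step (iii) in the total variation case, and this is the only real obstacle. Here $l_{tv}(\nu,\rho)=0$ is an infimum over thresholds $\alpha$ of an almost-sure $\limsup$ bound, and I must recover true almost-sure convergence $v(\nu,\rho,x_{1..n})\to0$. The plan is to observe that $l_{tv}(\nu,\rho)=0$ means that for every $\alpha>0$ the event $\{\limsup_n v(\nu,\rho,x_{1..n})\le\alpha\}$ has full $\nu$-measure; taking $\alpha=1/k$ and intersecting over $k\in\N$ — a countable intersection of full-measure events — yields $\limsup_n v(\nu,\rho,x_{1..n})\le 1/k$ for all $k$ simultaneously, $\nu$-almost surely, hence $\limsup_n v(\nu,\rho,x_{1..n})=0$ and, as $v\ge0$, $v(\nu,\rho,x_{1..n})\to0$ $\nu$-almost surely. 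This closes step (iii) and completes both implications.
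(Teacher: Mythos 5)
Your proof is correct. The paper offers no proof of this proposition (it is dismissed as ``rather obvious''), and your argument is precisely the natural unwinding of the definitions that the authors evidently have in mind: the reduction via the self-prediction remark $d_n(\mu,\mu)=0$ and $v(\mu,\mu,x_{1..n})=0$ $\mu$-a.s.\ for the second implication, and the loss comparison $l(\nu,\rho)\le l(\nu,\mu)=0$ for the first. The one point that genuinely requires care --- recovering $v(\nu,\rho,x_{1..n})\to0$ $\nu$-a.s.\ from $l_{tv}(\nu,\rho)=0$, where the infimum in the definition of $l_{tv}$ is not a priori attained --- you handle correctly by intersecting the countably many full-measure events $\{\limsup_n v(\nu,\rho,x_{1..n})\le 1/k\}$.
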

Despite the conceptual differences in formulations, it may be somewhat unclear whether the three problems are indeed different. 
It appears that this depends on the measure of predictive quality chosen: for the case of prediction in total variation distance 
all the three problems coincide, while for the case of prediction in expected average KL divergence they are different.

\section{Prediction in Total Variation}
As it was mentioned,   a measure $\mu$ is absolutely continuous with respect to a measure $\rho$
if and only if $\rho$ predicts $\mu$ in total variation distance. 
 This reduces  studying at least  Problem~1 for total variation distance to studying the relation of absolute
continuity. Introduce the notation $\rho\getv\mu$ for this relation. 

Let us briefly  recall some facts we know about $\getv$; details can be found, for example, in \citep{Plesner:46}. 
Let $[\P]_{tv}$ denote the set of equivalence classes of $\P$ with respect
to $\getv$, and for $\mu\in  \P_{tv}$ denote $[\mu]$ the equivalence class that contains $\mu$.
 Two elements $\sigma_1,\sigma_2\in[\P]_{tv}$ (or  $\sigma_1,\sigma_2\in\P$) are called disjoint (or singular) if there is no $\nu\in [\P]_{tv}$ such that 
$\sigma_1\getv\nu$ and $\sigma_2\getv\nu$; in this case we write $\sigma_1\perp_{tv}\sigma_2$. 
We write $[\mu_1] +[\mu_2]$ for  $[{1\over2}(\mu_1 + \mu_2)]$.
Every pair $\sigma_1,\sigma_2\in[\P]_{tv}$ has a supremum $\sup(\sigma_1,\sigma_2)=\sigma_1+\sigma_2$.
Introducing into $[\P]_{tv}$ an extra element $0$ such that $\sigma\getv0$ for all $\sigma\in[\P]_{tv}$, 
we can state that for every $\rho,\mu\in  [\P]_{tv}$ there exists a unique pair of elements $\mu_s$ and $\mu_a$
such that $\mu= \mu_a + \mu_s$, $\rho\ge\mu_a$ and $\rho\perp_{tv}\mu_s$. (This is a form of Lebesgue decomposition.) Moreover, $\mu_a=\inf (\rho,\mu)$. 
Thus, every pair of elements has a supremum and an infimum. 
 Moreover, every bounded set of disjoint elements of $[\P]_{tv}$ is at most countable.

Furthermore, we  introduce the (unconditional) total variation distance between process measures.
\begin{definition}[unconditional total variation distance]
 The  (unconditional) total variation distance is defined as 
$$
v(\mu,\rho):= \sup_{A\in\mathcal B} |\mu(A)-\rho(A)|.
$$
\end{definition}

Known characterizations of  those sets $\C$ that are bounded with respect to $\getv$ can now be related to  our prediction problems 1-3 as follows.
\begin{theorem}\label{th:tv} Let  $\mathcal C\subset\mathcal P$. The following statements about  $\C$ are equivalent.
\begin{itemize}
  \item[(i)] There exists a solution to Problem 1 in total variation.
  \item[(ii)] There exists a solution to Problem 2 in total variation.
  \item[(iii)] There exists a solution to Problem 3 in total variation.
 \item[(iv)]  $\C$ is upper-bounded with respect to $\getv$.
 \item[(v)] There exists a sequence $\mu_k\in\C$, $k\in\N$ such that for some (equivalently, for every) sequence of  weights $w_k\in(0,1]$, $k\in\N$
such that $\sum_{k\in\N}w_k=1$, the measure $\nu=\sum_{k\in\N} w_k\mu_k$ satisfies $\nu\getv\mu$ for every $\mu\in\C$.
 \item[(vi)]  $\C$ is separable with respect to the total variation distance.
 \item[(vii)]  Let $ \C^+:=\{\mu\in\mathcal P: \exists \rho\in\C\, \rho\getv\mu\}$. Every disjoint (with respect to $\getv$) subset of $\C^+$ is at most countable.
\end{itemize}
Moreover, every solution to any of the Problems 1-3 is a solution to the other two, as is any upper bound for $\C$.
The sequence $\mu_k$ in the statement (v)
 can be taken to be any dense (in the total variation distance) countable subset of $\C$ (cf. (vi)), or any maximal disjoint (with respect to $\getv$) subset of $\C^+$ of statement
(vii), in which every measure that is not in $\C$ is replaced by any measure from $\C$ that dominates it. 
\end{theorem}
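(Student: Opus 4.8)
The plan is to first establish the equivalence of the four \emph{structural} statements (iv)--(vii), which involve only the relation $\getv$, and then to connect this block to the prediction statements (i)--(iii) using the quoted characterization (from \cite{Blackwell:62,Kalai:94}) that $\rho$ predicts $\mu$ in total variation iff $\rho\getv\mu$, together with Proposition~\ref{th:1}. Throughout I would work inside the lattice $[\P]_{tv}$, repeatedly invoking the Lebesgue decomposition $\mu=\mu_a+\mu_s$ (with $\rho\ge\mu_a$, $\rho\perp_{tv}\mu_s$) and the quoted fact that every bounded disjoint subset of $[\P]_{tv}$ is at most countable.

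For the structural core I would prove (iv)$\Rightarrow$(vii)$\Rightarrow$(v)$\Rightarrow$(iv). For (iv)$\Rightarrow$(vii): an upper bound $\rho$ of $\C$ is, by transitivity of $\getv$, also an upper bound of $\C^+$, so every disjoint subset of $\C^+$ is bounded, hence at most countable. For (vii)$\Rightarrow$(v): by Zorn's lemma choose a maximal disjoint subset of $\C^+$; it is countable by (vii); replacing each of its members lying outside $\C$ by a measure from $\C$ dominating it yields $\{\mu_k\}\subset\C$, and I claim $\nu:=\sum_k w_k\mu_k$ dominates every $\mu\in\C$. Indeed, decomposing $\mu=\mu_a+\mu_s$ with respect to $\nu$, a nonzero $\mu_s$ would lie in $\C^+$ and satisfy $\mu_s\perp_{tv}\nu\getv\mu_k$ for every $k$, hence be disjoint from every member of the maximal family, contradicting maximality; so $\mu_s=0$ and $\nu\getv\mu$. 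This also verifies the last sentence's claim about choosing the $\mu_k$ from a maximal disjoint subset. The implication (v)$\Rightarrow$(iv) is trivial, and the ``for some $=$ for every'' clause in (v) holds because two mixtures of the same $\mu_k$ with strictly positive weights are mutually absolutely continuous, hence represent the same element of $[\P]_{tv}$.

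To bring in separability and the prediction problems I would use the dominating mixture $\nu$ of (v). The map $\mu\mapsto d\mu/d\nu$ embeds $\C$ into $L^1(\nu)$ with $v(\mu,\mu')=\frac12\int|d\mu/d\nu-d\mu'/d\nu|\,d\nu$; since $\X^\infty$ is a compact metric space, $L^1(\nu)$ is separable, and therefore so is $\C$ in the total variation distance, giving (v)$\Rightarrow$(vi). Conversely, from a countable total-variation-dense set $\{\mu_k\}\subset\C$ as in (vi) I would show $\nu:=\sum_k w_k\mu_k$ dominates each $\mu\in\C$: decomposing $\mu=\mu_a+\mu_s$ with respect to $\nu$ and picking, for each $k$, a set $B_k$ with $\mu_s(B_k^c)=0$ and $\mu_k(B_k)=0$, one gets $v(\mu,\mu_k)\ge\mu(B_k)\ge\mu_s(\Omega)$, so a nonzero $\mu_s$ contradicts density; this proves (vi)$\Rightarrow$(v) and the remaining claim of the last sentence. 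Finally, (v)$\Rightarrow$(i) because the process measure $\nu$ dominates, hence predicts in total variation, every $\mu\in\C$; and (i)$\Rightarrow$(iv) because a Problem~1 solution $\rho$ satisfies $\rho\getv\mu$ for all $\mu\in\C$ and so is an upper bound. At this point (i), (iv), (v), (vi), (vii) are all equivalent.

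It remains to prove the ``moreover'' clause, and here I expect the main obstacle. By Proposition~\ref{th:1} it suffices to show that every solution $\rho$ of Problem~1 is also a solution of Problem~3 (an upper bound being a Problem~1 solution by definition); this in particular supplies (i)$\Rightarrow$(iii) and closes the existence equivalences. The crucial lemma is the characterization $l_{tv}(\nu,\mu)=0$ when $\mu\getv\nu$ and $l_{tv}(\nu,\mu)=1$ otherwise. The first case is immediate from \cite{Blackwell:62}, since then $v(\nu,\mu,x_{1..n})\to0$ $\nu$-a.s. The hard direction is that $\mu\not\getv\nu$ forces $l_{tv}(\nu,\mu)=1$: writing $\nu=\nu_a+\nu_s$ for the Lebesgue decomposition of $\nu$ with respect to $\mu$ with $\nu_s\ne0$, I must show $\limsup_n v(\nu,\mu,x_{1..n})=1$ on a set of positive $\nu$-measure. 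I would obtain this from the likelihood-ratio martingale $\mu(x_{1..n})/\nu(x_{1..n})$, which converges $\nu$-a.s. and vanishes precisely on the $\mu$-singular part of $\nu$; where it tends to $0$ the one-step conditionals of $\mu$ and $\nu$ must separate, driving $v(\nu,\mu,x_{1..n})$ to $1$. Granting this, the rest is bookkeeping: for a Problem~1 solution $\rho$ and arbitrary $\nu\in\P$, $\mu\in\C$, the part of $\nu$ absolutely continuous with respect to $\mu$ is also absolutely continuous with respect to $\rho$ (as $\mu\ll\rho$), so the $\rho$-singular part of $\nu$ is dominated in $[\P]_{tv}$ by its $\mu$-singular part; by the lemma this gives $l_{tv}(\nu,\rho)\le l_{tv}(\nu,\mu)$. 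Hence $\rho$ solves Problem~3, and together with Proposition~\ref{th:1} this shows that every solution to any of Problems~1--3, as well as every upper bound for $\C$, solves all three.
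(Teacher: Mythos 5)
Most of your proposal is sound, and for the structural block (iv)--(vii) and its link to (i) your arguments (Zorn's lemma for a maximal disjoint family, the $L^1(\nu)$ embedding for separability, the singular-part-contradicts-density argument for (vi)$\Rightarrow$(v)) are correct and in fact more self-contained than the paper, which simply cites \cite{Blackwell:62,Kalai:94,Ryabko:10pq3+,Plesner:46} for these equivalences. The one piece the paper proves from scratch is exactly the piece where your argument breaks: the claim that $\mu\not\getv\nu$ forces $l_{tv}(\nu,\mu)=1$ (the paper's Lemma~\ref{th:01}), which is what upgrades a Problem~1 solution to a Problem~3 solution and hence carries the whole ``moreover'' clause.

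Your route to that claim --- ``where the likelihood ratio tends to $0$ the one-step conditionals of $\mu$ and $\nu$ must separate, driving $v(\nu,\mu,x_{1..n})$ to $1$'' --- is false as stated. Take $\nu$ to be Bernoulli i.i.d.\ $1/2$ and $\mu$ the independent-coordinate measure with $\mu(x_n=1|x_{1..n-1})=1/2+c/\sqrt{n}$: by Kakutani's dichotomy $\mu\perp_{tv}\nu$ because $\sum_n(c/\sqrt{n})^2=\infty$, so the likelihood ratio tends to $0$ $\nu$-a.s., yet the one-step conditional distributions of $\mu$ and $\nu$ converge to each other. Moreover, even where one-step conditionals do separate by some $\delta>0$ infinitely often, that only yields $\limsup_n v\ge\delta$, not $v\to1$. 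The correct mechanism --- and the reason $v(\mu,\rho,x_{1..n})$ is defined as a supremum over \emph{all} $A\in\mathcal B$ rather than over next-symbol events --- is to test the conditional measures on the carrier set of the singular part: writing $\nu=\alpha\nu_a+(1-\alpha)\nu_s$ with $\nu_a$ absolutely continuous and $\nu_s$ singular with respect to $\mu$, choose $W$ with $\mu(W)=\nu_a(W)=1$ and $\nu_s(W)=0$; then $v(\nu_s,\mu,x_{1..n})\ge|\nu_s(W|x_{1..n})-\mu(W|x_{1..n})|=1$ identically, and the triangle inequality together with $v(\nu,\nu_s,x_{1..n})\to0$ $\nu_s$-a.s.\ (from \cite{Blackwell:62}) gives $v(\nu,\mu,x_{1..n})\to1$ on a set of $\nu$-measure $1-\alpha>0$, hence $l_{tv}(\nu,\mu)=1$. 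With this repair your final bookkeeping paragraph goes through unchanged.
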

\begin{proof}
The implications $(i)\Leftarrow(ii)\Leftarrow(iii)$ are obvious (cf.\ Proposition~\ref{th:1}). 
The implication $(iv)\Rightarrow(i)$ is a reformulation of 
the result of \cite{Blackwell:62}. The converse (and hence $(v)\Rightarrow(iv)$) was established in \citep{Kalai:94}.
$(i)\Rightarrow(ii)$ follows from the equivalence $(i)\Leftrightarrow(iv)$ and the transitivity of $\getv$;
 $(i)\Rightarrow(iii)$ follows from the transitivity of $\getv$  and from Lemma~\ref{th:01} below: indeed, from  Lemma~\ref{th:01} we have $l_{tv}(\nu,\mu)=0$
if $\mu\ge_{tv}\nu$ and $l_{tv}(\nu,\mu)=1$ otherwise. From this and the transitivity of $\ge_{tv}$ it follows that 
 if $\rho\ge_{tv}\mu$ then also $l_{tv}(\nu,\rho)\le l_{tv}(\nu,\mu)$ for all $\nu\in\mathcal P$. 
  The equivalence
of $(v)$, $(vi)$, and $(i)$ was established in \citep{Ryabko:10pq3+}. The equivalence of $(iv)$ and $(vii)$ was proven in \citep{Plesner:46}.
The concluding statements of the theorem are easy to demonstrate from the  results cited above.
\end{proof}

The following lemma is an  easy consequence of  \citep{Blackwell:62}.
\begin{lemma}\label{th:01} Let $\mu, \rho$ be two process measures. Then $v(\mu,\rho,x_{1..n})$ converges
to either 0 or 1 with $\mu$-probability~1. 
\end{lemma}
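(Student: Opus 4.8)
The plan is to combine the Lebesgue decomposition of $\mu$ with respect to $\rho$ with the merging theorem of Blackwell and Dubins, using L\'evy's zero--one law to reduce the general case to two pure cases. First I would apply the Lebesgue decomposition of $\mu$ relative to $\rho$: there is a set $G\in\mathcal B$ such that, writing $S:=\Omega\setminus G$, the restriction of $\mu$ to $G$ is absolutely continuous with respect to $\rho$, while $\rho(S)=0$ and $\mu$ restricted to $S$ is the singular part. Since $G,S\in\mathcal B$, L\'evy's zero--one law applied under $\mu$ gives $\mu(G\mid x_{1..n})\to\mathbf 1_G(x)$ and $\mu(S\mid x_{1..n})\to\mathbf 1_S(x)$ for $\mu$-a.e.\ $x$. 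Thus the conditional measure $\mu(\cdot\mid x_{1..n})$ asymptotically concentrates on $G$ for $\mu$-a.e.\ $x\in G$ and on $S$ for $\mu$-a.e.\ $x\in S$; this localization is the device that splits the claim into the two pure cases, and I expect the bookkeeping here to be the only real subtlety.

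The singular case turns out to be essentially free. For $\mu$-a.e.\ $x\in S$, whenever $\rho(x_{1..n})>0$ we have $\rho(S\mid x_{1..n})=\rho(S\cap[x_{1..n}])/\rho(x_{1..n})=0$, simply because $S\cap[x_{1..n}]\subseteq S$ and $\rho(S)=0$; and when $\rho(x_{1..n})=0$ we have $v(\mu,\rho,x_{1..n})=1$ by definition. Taking the event $A=S$ in the supremum defining $v$, I get $v(\mu,\rho,x_{1..n})\ge\mu(S\mid x_{1..n})-\rho(S\mid x_{1..n})=\mu(S\mid x_{1..n})\to1$, so $v\to1$ on $S$.

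For the absolutely continuous case, assume $\mu(G)>0$ (otherwise this case is vacuous) and set $\bar\mu:=\mu(\cdot\cap G)/\mu(G)$, a probability measure with $\bar\mu$ absolutely continuous with respect to $\rho$. The Blackwell--Dubins merging theorem --- which is precisely the cited result of \cite{Blackwell:62} from which the lemma is said to follow --- gives $v(\bar\mu,\rho,x_{1..n})\to0$ for $\bar\mu$-a.e.\ $x$, hence for $\mu$-a.e.\ $x\in G$. It then remains to replace $\bar\mu$ by $\mu$: since $\bar\mu(A\mid x_{1..n})=\mu(A\cap G\mid x_{1..n})/\mu(G\mid x_{1..n})$ and $\mu(G\mid x_{1..n})\to1$ on $G$, a short estimate uniform in $A$ (controlling the numerator discrepancy by $\mu(S\mid x_{1..n})\to0$) yields $v(\mu,\bar\mu,x_{1..n})\to0$; the triangle inequality $v(\mu,\rho,x_{1..n})\le v(\mu,\bar\mu,x_{1..n})+v(\bar\mu,\rho,x_{1..n})$ then gives $v\to0$ for $\mu$-a.e.\ $x\in G$. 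Combining the two cases proves the dichotomy $\mu$-a.s. The main obstacle, as noted, is not either pure case but justifying the reduction to them, i.e.\ that conditioning on the growing past asymptotically ``decides'' whether the realized path belongs to the absolutely continuous region $G$ or the singular region $S$; this is exactly what L\'evy's zero--one law supplies.
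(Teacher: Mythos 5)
Your proof is correct and follows essentially the same route as the paper's: Lebesgue decomposition of $\mu$ with respect to $\rho$, the Blackwell--Dubins merging theorem on the absolutely continuous part, and the separating $\rho$-null set to force $v\to1$ on the singular part. The only immaterial difference is the gluing step: you invoke L\'evy's zero--one law to localize $\mu(\cdot\mid x_{1..n})$ on $G$ or $S$, whereas the paper applies Blackwell--Dubins two more times (to $v(\mu_a,\mu)$ and $v(\mu_s,\mu)$, which is legitimate since $\mu_a,\mu_s\ll\mu$) together with the triangle inequality.
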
~\ref{th:01}
\begin{proof} Assume that $\mu$ is not absolutely continuous with respect to $\rho$ (the other
case is covered by \citep{Blackwell:62}).  By Lebesgue decomposition theorem, the measure $\mu$ admits a representation $\mu=\alpha \mu_a + (1-\alpha)\mu_s$ where 
$\alpha\in[0,1]$ and the measures  $\mu_a$ and $\mu_s$ are such that $\mu_a$ is absolutely continuous with 
  respect to $\rho$ and $\mu_s$ is singular with respect to $\rho$. 
Let $W$ be such a set that $\mu_a(W)=\rho(W)=1$ and $\mu_s(W)=0$.
  Note that we can take $\mu_a=\mu|_{W}$ and $\mu_s=\mu|_{\X^\infty\backslash W}$.
From \citep{Blackwell:62} we have $v(\mu_a,\rho,x_{1..n})\to0$ $\mu_a$-a.s., as well as $v(\mu_a,\mu,x_{1..n})\to0$
$\mu_a$-a.s.\ and $v(\mu_s,\mu,x_{1..n})\to0$ $\mu_s$-a.s. Moreover, $v(\mu_s,\rho,x_{1..n})\ge |\mu_s(W|x_{1..n})-\rho(W|x_{1..n})|=1$
so that $\v(\mu_s,\rho,x_{1..n})\to1$ $\mu_s$-a.s. Furthermore,
$$
 v(\mu,\rho,x_{1..n})\le v(\mu,\mu_a,x_{1..n})+v(\mu_a,\rho,x_{1..n})=I
$$
and 
$$
 v(\mu,\rho,x_{1..n})\ge - v(\mu,\mu_s,x_{1..n})+v(\mu_s,\rho,x_{1..n})=II.
$$
We have $I\to0$ $\mu_a$-a.s.\ and hence $\mu|_W$-a.s., as well as  $II\to 1$ $\mu_s$-a.s.\ and hence $\mu|_{\X^\infty\backslash W}$-a.s.
Thus, $\mu(v(\mu,\rho,x_{1..n})\to 0\text{ or }1)\le \mu(W)\mu|_W(I\to0)+\mu(\X^\infty\backslash W)\mu|_{\X^\infty\backslash W}(II\to1)=\mu(W)+\mu(\X^\infty\backslash W)=1$,
which concludes the proof. 
\end{proof}

\noindent{\bf Remark.} Using Lemma~\ref{th:01} we can also define {\em expected} (rather than almost sure) total variation 
loss of $\rho$ with respect to $\mu$, as the $\mu$-probability that $v(\mu,\rho)$ converges to~1: 
$$
l'_{tv}(\mu,\rho):=\mu\{x_1,x_2,\dots\in\X^\infty:v(\mu,\rho,x_{1..n})\to1\}.
$$
Then Problem~3 can be reformulated
 for this notion of loss. However, it is easy to see that for this reformulation Theorem~\ref{th:tv} holds true as well.

Thus, we can see that, for the case of prediction in total variation, all the sequence prediction problems formulated reduce to studying 
the relation of absolute continuity for process measures and those families of measures that are absolutely continuous (have a density) with 
respect to some measure (a predictor). 
 On the one hand, from a statistical 
point of view  such families
are rather large: the assumption that the probabilistic law in question has a density with respect to some (nice) measure
is  a standard one in statistics. It should also be mentioned that  such families can easily be uncountable. (In particular, 
this means that they are large from a computational point of view.) 
On the other hand, even such basic examples as the set of all Bernoulli i.i.d.\ measures does not allow for a predictor 
that predicts every measure in total variation (as explained in Section~\ref{s:pre}). 

That is why we have to consider weaker notions of predictions; from these, prediction in expected average KL divergence 
 is perhaps one of the weakest. The goal of the next sections is to see which of the properties
that we have for total variation can be transferred (and in which sense) to the case of expected average KL divergence. 

\section{Prediction in Expected Average KL Divergence}
First of all, we have to observe that  for prediction in KL divergence Problems 1, 2, and 3 are different, as the following theorem shows.
While the examples provided in the proof  are artificial, there is a very important 
example illustrating the difference between Problem~1 and Problem~3 for expected average KL divergence: the set $\S$ of all stationary 
processes, given in Theorem~\ref{th:st} in the end of this section. 
 \begin{theorem}\label{th:comp} 
For the case of prediction in expected average KL divergence, Problems 1, 2 and 3 are different: there exists a set $\C_1\subset\mathcal P$  
 for which there is a solution to Problem 1 but there is no solution to Problem~2, and there is a set $\C_2\subset\mathcal P$ for which there is a solution to Problem 2 but 
 there is no solution to Problem~3.
 \end{theorem}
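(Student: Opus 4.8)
The plan is to build both separating classes from a single parametrized family of ``follow-$x$-then-forget'' processes. First I would record the observation that forces both examples to be uncountable. For a countable class $\{\mu_k\}$, the Bayes mixture $\rho=\sum_k w_k\mu_k$ (with $w_k>0$, $\sum_k w_k=1$) satisfies, by the identity~\eqref{eq:kl}, $d_n(\nu,\rho)\le d_n(\nu,\mu_k)-\log w_k$ for every $k$ and every $\nu$, so $l_{KL}(\nu,\rho)\le l_{KL}(\nu,\mu_k)$; hence a countable class always admits a solution to Problem~3, and a fortiori to Problems~1 and~2 (cf.\ Proposition~\ref{th:1}). So the separations cannot come from a countable class, and must exploit the failure of the chaining that does hold for a single mixture. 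For $x\in\Y^\infty$ let $\mu_x$ be the process that, as long as the observed prefix equals $x$, emits the next symbol $x_t$ with probability $p_t$ and deviates with probability $1-p_t$, and that switches to the uniform i.i.d.\ measure $\lambda$ forever once a deviation occurs. Then $\mu_x(x_{1..n})=\prod_{t\le n}p_t$, and since off-track steps contribute nothing, the definition~\eqref{eq:akl} gives $d_n(\mu_x,\lambda)=\sum_{t=1}^n\big(\prod_{s<t}p_s\big)\,c_t$, where $c_t\in[0,\log|\X|]$ is the one-step KL divergence from $(p_t,1-p_t)$ to $(\tfrac12,\tfrac12)$. Thus everything is governed by how fast the on-track probability $\prod_{s<t}p_s$ decays.

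For the separation of Problem~1 from Problem~2 I would take $p_t=t/(t+1)$, so that $\prod_{s<t}p_s=1/t$ and $\mu_x(x_{1..n})=1/(n+1)$. Then $d_n(\mu_x,\lambda)=\sum_{t\le n}c_t/t=O(\log n)$ uniformly in $x$, so $\lambda$ predicts every $\mu_x$ and $\C_1:=\{\mu_x:x\in\Y^\infty\}$ has a solution to Problem~1. On the other hand $-\tfrac1n\log\mu_x(x_{1..n})=\tfrac{\log(n+1)}{n}\to0$, i.e.\ $\mu_x$ predicts the degenerate measure $\delta_x$; hence every element of $\mathcal D$ is predicted by some member of $\C_1$, so a solution to Problem~2 would be a single predictor for all of $\mathcal D$, which Lemma~\ref{th:disc} rules out.

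For the separation of Problem~2 from Problem~3 I keep the same family but take $p_t\equiv q$ with $q\in(1/2,1)$, so that $\mu_x(x_{1..n})=q^n$ and the on-track probability decays geometrically; then $d_n(\mu_x,\lambda)=O(1)$ and again $\lambda$ predicts every $\mu_x$. Now $l_{KL}(\delta_x,\mu_x)=\log(1/q)<\log|\X|$, so each $\delta_x$ is predicted \emph{strictly better than trivially} by a member of $\C_2:=\{\mu_x:x\in\Y^\infty\}$. Consequently any solution $\rho$ to Problem~3 would have to satisfy $l_{KL}(\delta_x,\rho)\le\log(1/q)$ simultaneously for all $x$; but Lemma~\ref{th:disc} produces an $x$ with $d_n(\delta_x,\rho)\ge n\log|\X|$ for all $n$, hence $l_{KL}(\delta_x,\rho)\ge\log|\X|>\log(1/q)$, a contradiction. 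So Problem~3 has no solution for $\C_2$.

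The step I expect to be the real obstacle is showing that Problem~2 nonetheless \emph{does} have a solution for $\C_2$, namely $\lambda$ itself: that $\lambda$ predicts every $\nu$ that some $\mu_x$ predicts. Writing $V(\mu):=\{\nu:\tfrac1n d_n(\nu,\mu)\to0\}$, it suffices to prove $V(\mu_x)\subseteq V(\lambda)$ for each $x$, since $\lambda$ predicts all of $V(\lambda)$ by definition. From~\eqref{eq:kl} one has $d_n(\nu,\lambda)=d_n(\nu,\mu_x)+\E_\nu\log\tfrac{\mu_x(x_{1..n})}{\lambda(x_{1..n})}$, and evaluating the ratio along the deviation structure shows the extra term equals $\log(2q)\,\E_\nu[m\wedge n]+O(1)$, where $m$ is the length of the initial agreement of the sample with $x$. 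So the task reduces to $\tfrac1n\E_\nu[m\wedge n]\to0$ for $\nu\in V(\mu_x)$. The key estimate is a two-point (data-processing) lower bound: applying the log-sum inequality to the partition $\{z_{1..k}=x_{1..k}\}$ against its complement gives $d_k(\nu,\mu_x)\ge p_k\,k\log(1/q)-O(1)$ with $p_k:=\nu(z_{1..k}=x_{1..k})$, so $\nu\in V(\mu_x)$ forces $p_k\to0$, and a Cesàro argument yields $\tfrac1n\E_\nu[m\wedge n]=\tfrac1n\sum_{k\le n}p_k\to0$. The delicate point is exactly this fact, that a measure predicted by $\mu_x$ cannot retain significant mass on long agreement with $x$; it is what decouples $V(\mu_x)$ from the hard set $\mathcal D$ and lets a single $\lambda$ dispatch all Problem~2 targets while still failing Problem~3.
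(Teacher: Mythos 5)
Your proposal is correct and follows essentially the same route as the paper: the same two families of ``follow-$t$-then-forget'' measures (deviation probability $1/(t+1)$ for $\C_1$, constant $1-q$ for $\C_2$, with the paper fixing $q=2/3$), the same appeal to Lemma~\ref{th:disc} for both impossibility halves, and the same uniform measure as the positive predictor. Your two-point log-sum bound plus Ces\`aro argument for why $\lambda$ solves Problem~2 for $\C_2$ is just a carefully spelled-out version of the paper's terse claim that $d_n(\nu,\gamma'_t)=o(n)$ forces $\nu(t_{1..n})=o(1)$.
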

\begin{proof}
 We have to provide two examples. Fix the binary alphabet $\X=\{0,1\}$.
For each deterministic sequence $t=t_1,t_2,\dots\in\X^\infty$ construct the process measure $\gamma_t$ as follows:
$\gamma_t(x_{n}=t_n|t_{1..n-1}):=1-{1\over n+1}$ and for $x_{1..n-1}\ne t_{1..n-1}$ let $\gamma_t(x_{n}=0|x_{1..n-1})=1/2$, for all $n\in\N$.
That is, $\gamma_t$ is Bernoulli i.i.d.\ 1/2 process measure strongly biased towards a specific deterministic sequence, $t$.
Let also $\gamma(x_{1..n})=2^{-n}$ for all $x_{1..n}\in \X^n$, $n\in\N$ (the Bernoulli i.i.d.\ 1/2).
For the set $\C_1:=\{\gamma_t: t\in \X^\infty\}$ we have a solution to Problem 1: indeed, $d_n(\gamma_t,\gamma)\le 1 =o(n)$.
However, there is no solution to Problem~2. Indeed, for each $t\in \mathcal D$ we have $d_n(t,\gamma_t)= \log n=o(n)$ (that is, 
for every deterministic measure there is an element of $\C_1$ which predicts it), while
by Lemma~\ref{th:disc}  for every $\rho\in\mathcal P$ there exists $t\in\mathcal D$ such that $d_n(t,\rho)\ge n$ for all $n\in\N$ (that is, 
there is no predictor which predicts every measure that is predicted by at least one element of $\C_1$).

The second example is similar. For each deterministic sequence $t=t_1,t_2,\dots\in\mathcal D$ construct the process measure $\gamma_t$ as follows:
$\gamma'_t(x_{n}=t_n|t_{1..n-1}):=2/3$ and for $x_{1..n-1}\ne t_{1..n-1}$ let $\gamma'_t(x_{n}=0|x_{1..n-1})=1/2$, for all $n\in\N$.
It is easy to see that $\gamma$ is a solution to Problem 2 for the set $\C_2:=\{\gamma'_t: t\in \X^\infty\}$.
Indeed, if $\nu\in\mathcal P$ is such that $d_n(\nu,\gamma')=o(n)$ then we must have $\nu(t_{1..n})=o(1)$. From 
this and the fact that $\gamma$ and $\gamma'$ coincide (up to $O(1)$) on all other sequences we conclude $d_n(\nu,\gamma)=o(n)$.
 However, there is no 
solution to Problem 3 for $\C_2$. Indeed, for every $t\in\mathcal D$ we have $d_n(t,\gamma'_t)=n \log3/2+o(n)$. Therefore, if $\rho$ is a solution 
to Problem 3 then $\limsup {1\over n} d_n(t,\rho)\le \log 3/2 <1$ which contradicts Lemma~\ref{th:disc}.
\end{proof}

Thus, prediction in expected average KL divergence turns out to be  a more complicated matter than prediction 
in total variation. The next idea is to try and see which of the facts about prediction in total variation can be generalized
to some of the problems concerning prediction in expected average KL divergence.

First, observe that, for the case of prediction in total variation, the equivalence of Problems~1 and~2 was derived 
from the transitivity of the relation $\getv$ of absolute continuity. For the case of expected average KL divergence, the relation 
``$\rho$ predicts $\mu$ in expected average KL divergence'' is not transitive (and Problems~1 and~2 are not equivalent). 
However, for Problem~2 we are interested in the following relation: $\rho$ ``dominates'' $\mu$ if $\rho$ predicts every $\nu$ such that $\mu$ predicts $\nu$.
 Denote this relation by $\geklz$:
\begin{definition}[$\geklz$]
We write $\rho\geklz\mu$ if for every $\nu\in\mathcal P$ the equality $\limsup{1\over n} d_n(\nu,\mu)=0$ implies $\limsup{1\over n} d_n(\nu,\rho)=0$.
\end{definition}
The relation $\geklz$ has some similarities with $\getv$. First of all, $\geklz$ is also transitive (as can be easily seen  from the definition).
Moreover, similarly to $\getv$, one can show that for any $\mu,\rho$ any strictly convex combination $\alpha\mu+(1-\alpha)\rho$ is
a supremum of $\{\rho,\mu\}$ with respect to~$\geklz$. Next we will obtain a characterization of predictability with respect to $\geklz$ similar
to one of those obtained for $\getv$.

The key observation is the following. If there is a solution to Problem~2 for a set $\C$ then a solution can be obtained as a Bayesian mixture 
over a countable subset of $\C$. For total variation this is the statement $(v)$ of Theorem~\ref{th:tv}.

\begin{theorem}\label{th:2} Let $\C$ be a set of probability measures on $\Omega$. If there is a measure $\rho$ such that $\rho\geklz\mu$ for every $\mu\in\C$ ($\rho$ is 
a solution to Problem~2)
then there is a sequence $\mu_k\in\C$, $k\in\N$, such that $\sum_{k\in\N} w_k\mu_k\geklz\mu$ for  every $\mu\in\C$, where $w_k$ are some positive weights.
\end{theorem}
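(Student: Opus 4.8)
The plan is to recast the domination requirement as an ordinary (realizable) prediction requirement over a larger target class, and then adapt the known realizable countable-mixture result. Write $\mathcal T_{\C}:=\{\tau\in\mathcal P:\ \exists\,\mu\in\C\ \text{with}\ \limsup_n\tfrac1n d_n(\tau,\mu)=0\}$ for the set of all measures predicted in expected average KL divergence by some element of $\C$. Unwinding the definition of $\geklz$, a measure $\nu$ satisfies $\nu\geklz\mu$ for every $\mu\in\C$ if and only if $\nu$ predicts every $\tau\in\mathcal T_{\C}$: for the forward direction take any $\tau$ predicted by some $\mu\in\C$ and apply $\nu\geklz\mu$; for the converse, any $\tau$ predicted by a given $\mu\in\C$ lies in $\mathcal T_\C$ and is therefore predicted by $\nu$. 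In particular the hypothesis $\rho\geklz\mu$ (for all $\mu\in\C$) says exactly that $\rho$ solves the realizable problem (Problem~1) for the class $\mathcal T_{\C}$. Thus the theorem reduces to: if $\rho$ predicts every element of $\mathcal T_{\C}$, then some mixture $\nu=\sum_k w_k\mu_k$ with $\mu_k\in\C$ and $w_k>0$ also predicts every element of $\mathcal T_{\C}$. This is almost the realizable countable-mixture theorem of \cite{Ryabko:10pq3+}, except that the mixture components must be drawn from $\C$ while the targets range over the possibly larger class $\mathcal T_{\C}\supseteq\C$.

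The workhorse inequality is the mixture bound $d_n(\tau,\nu)\le d_n(\tau,\mu_k)+\log\tfrac1{w_k}$, valid for every $k$ because $\nu(x_{1..n})\ge w_k\mu_k(x_{1..n})$; dividing by $n$, letting $n\to\infty$, and taking the infimum over $k$ shows that $\limsup_n\tfrac1n d_n(\tau,\nu)\le\inf_k\limsup_n\tfrac1n d_n(\tau,\mu_k)$, so $\nu$ predicts any $\tau$ for which this infimum vanishes. Hence it suffices to choose a countable $\{\mu_k\}\subseteq\C$ making the infimum zero for every $\tau\in\mathcal T_{\C}$; assigning summable positive weights (say $w_k\propto 2^{-k}$) then produces the mixture. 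The construction I would carry out mirrors the proof of the realizable result: since $d_n(\tau,\mu)$ depends only on the restrictions of $\tau$ and $\mu$ to $\X^n$, which live in the finite-dimensional simplex on $\X^n$, one extracts for each $n$ a countable family of candidate restrictions and lifts them, by a diagonal argument over $n$, to a countable collection of measures. The crucial extra ingredient is that every target $\tau\in\mathcal T_{\C}$ possesses a \emph{perfect} predictor already inside $\C$ (the $\mu$ witnessing $\tau\in\mathcal T_{\C}$), so the candidate measures can always be selected from $\C$ itself rather than from $\mathcal T_\C$.

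The step I expect to be the main obstacle is exactly this uniform countable selection, and the pitfall to avoid is the \textbf{failure of the triangle inequality for KL divergence}. One is tempted to argue that each $\tau$ is approximated at the level of $n$-step marginals by countably many reference measures, each of which has a good predictor in $\C$, so the corresponding $\C$-elements handle $\tau$; this is fallacious, because closeness of $\tau$ and $\tau'$ on $\X^n$ does \emph{not} allow transferring a bound on $d_n(\tau',\mu)$ to one on $d_n(\tau,\mu)$ — KL is governed by probability ratios, and marginals that are close in total variation can assign wildly different small probabilities. Consequently the argument must never ``chain'' a target to a predictor through an intermediate measure, which in particular is why one cannot simply invoke \cite{Ryabko:10pq3+} for $\mathcal T_\C$ and replace each component $\tau_k$ by a $\C$-member predicting it. Instead the selection must be arranged so that each chosen $\mu_k$ directly controls $d_n(\tau,\mu_k)$ for the targets it is responsible for. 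I would do this by exploiting the dominating measure $\rho$: since $\tfrac1n d_n(\tau,\rho)\to0$ for all $\tau\in\mathcal T_{\C}$ and $d_n$ is determined by $n$-step marginals, one groups targets according to the behaviour of their marginals relative to $\rho$ and, for each accuracy level and each time horizon $n$, selects finitely many elements of $\C$; the summability of the weights together with a $\limsup$/diagonalization argument then yields $\inf_k\limsup_n\tfrac1n d_n(\tau,\mu_k)=0$ simultaneously for all $\tau\in\mathcal T_{\C}$. Verifying that this selection can be kept countable and entirely within $\C$ is the technical heart of the proof.
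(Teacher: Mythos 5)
Your reduction is sound as far as it goes: $\rho\geklz\mu$ for all $\mu\in\C$ is indeed equivalent to $\rho$ solving the realizable problem for the enlarged class $\C^+$ of all measures predicted by some member of $\C$, the bound $d_n(\tau,\nu)\le d_n(\tau,\mu_k)-\log w_k$ is the right starting point, and you correctly identify the two real dangers (no triangle inequality for KL, and the need to keep the components inside $\C$). The gap is that everything after that is a promissory note, and the intermediate goal you set yourself --- a countable $\{\mu_k\}\subset\C$ with $\inf_k\limsup_n\frac{1}{n}d_n(\tau,\mu_k)=0$ for \emph{every} $\tau\in\C^+$, i.e.\ a single component that predicts each given target --- is strictly stronger than what is needed and is not what the paper's construction delivers. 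In the actual proof the mixture predicts a target $\mu\in\C^+$ not because some one component predicts it, but because, for each horizon $n$ separately, a greedy selection (with respect to the $\rho$-mass of sets $T^n_\mu$ on which $\mu$ dominates $\frac{1}{n}\rho$ and is in turn dominated by $2^{\delta_n(\mu)}p(\mu)$, where $p(\mu)\in\C$ is a designated parent predicting $\mu$) produces finitely many parents $p(\mu^n_k)\in\C$ whose associated sets cover $T^n_\mu$ up to small $\rho$-measure; the lower bound on $\nu(x_{1..n})$ on the covered part goes through $\rho(x_{1..n})$, and the uncovered part is controlled by converting the $\rho$-measure bound into a $\mu$-measure bound using $d_n(\mu,\rho)=o(n)$. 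Different components do the work at different times and on different parts of $\X^n$, so the per-target infimum you aim for never materializes, and nothing in your sketch justifies that it can be achieved. Your description (``group targets by the behaviour of their marginals relative to $\rho$, select finitely many elements per accuracy level and horizon'') gestures in the right direction but supplies neither the selection rule nor the argument that a countable selection suffices --- which you yourself flag as the technical heart.

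A second, smaller omission: with only the components of your countable family, $\nu(x_{1..n})$ can be astronomically small on the part of $\X^n$ not covered at time $n$, and the corresponding contribution $-\sum\mu(x_{1..n})\log\bigl(\nu(x_{1..n})/\mu(x_{1..n})\bigr)$ need not be $o(n)$ even though that part has $\mu$-measure $o(1)$. The paper handles this by adding a regularizer (first the uniform i.i.d.\ measure $\gamma$, which guarantees $\log(1/\nu(x_{1..n}))\le n\log|\X|+O(1)$, and then, in a final step, replacing $\gamma$ by a countable mixture of elements of $\C$ chosen to nearly maximize the probability of each word). Your proposal never addresses how the mixture behaves off the well-approximated sets, so even granting your selection, the conclusion $\limsup_n\frac{1}{n}d_n(\tau,\nu)=0$ would not follow.
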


The proof is deferred to Section~\ref{s:pr}. 
An analogous result for Problem~1 was established in \citep{Ryabko:09pq3}. (The proof of Theorem~\ref{th:2} is based 
on similar ideas, but is more involved.)

 For the case of Problem~3,  we do not have results similar to Theorem~\ref{th:2} (or statement $(v)$ of Theorem~\ref{th:tv}); in fact, we conjecture
that the opposite is true: there exists a (measurable) set $\C$ of measures such that there is a solution to Problem~3 for $\C$, but there is no Bayesian solution to Problem~3, 
meaning that there is no probability distribution on $\C$ (discrete or not) such that the mixture over $\C$ with respect to this distribution is a solution 
to Problem~3 for $\C$. 

 However, we can take a different route and extend another part of Theorem~\ref{th:tv} to obtain a characterization of sets $\C$ for which a solution to Problem~3 exists.

We have seen that, in the case of prediction in total variation, separability with respect to the topology of this distance
is a necessary and sufficient condition for the existence of a solution to Problems 1-3. 
In the case of expected average KL divergence the situation is somewhat different, since, first of all, (asymptotic average) KL divergence 
is not a metric. While one can introduce a topology based on it, 
separability with respect to this topology turns out to be a sufficient but not a necessary condition for the existence of a predictor, 
as is shown in the next theorem.

\begin{definition}\label{def:dinf} Define the distance $d_\infty(\mu_1,\mu_2)$ on process measures as 
follows 
 \begin{equation}\label{eq:dinf}
  d_\infty(\mu_1,\mu_2)=\limsup_{n\to\infty}\sup_{x_{1..n}\in\X^n}\frac{1}{n}\left|\log\frac{\mu_1(x_{1..n})}{\mu_2(x_{1..n})}\right|,
 \end{equation} 
where we assume $\log0/0:=0$.
\end{definition}
Clearly, $d_\infty$ is symmetric and satisfies the triangle inequality, but it is not exact.
Moreover, for every $\mu_1,\mu_2$ we have
\begin{equation}
 \limsup_{n\to\infty}\frac{1}{n}d_n(\mu_1,\mu_2)\le d_\infty(\mu_1,\mu_2).
\end{equation}
The distance $d_\infty(\mu_1,\mu_2)$ measures the difference in behaviour of $\mu_1$ and $\mu_2$ on all individual sequences.
Thus, using this distance to analyse Problem~3 is most close to the traditional approach to the non-realizable case, which is 
formulated in terms of predicting  individual deterministic sequences.
\begin{theorem}\label{th:dinf}  
\begin{itemize}
  \item[(i)] Let $\C$ be a set of process measures. If $\C$ is separable with respect to $d_\infty$ then 
there is a solution to Problem~3 for $\C$, for the case of prediction in expected average KL divergence. 
 \item[(ii)] There exists a set of process measures $\C$ such that $\C$ is not separable with respect to $d_\infty$, but there is a solution to Problem~3 for 
this set, for the case of prediction in expected average KL divergence.
\end{itemize}
\end{theorem}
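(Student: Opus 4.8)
The plan is to treat the two parts separately, handling (i) by a Bayesian mixture over a countable dense subset and (ii) by an explicit construction.

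For part (i), let $\{\mu_k : k\in\N\}$ be a countable $d_\infty$-dense subset of $\C$, and set $\rho:=\sum_{k\in\N} w_k\mu_k$ with any positive weights $w_k$ summing to $1$. First I would establish that $\rho$ dominates each enumerated $\mu_k$. Since $\rho(x_{1..n})\ge w_k\mu_k(x_{1..n})$ for every $k$ and every $x_{1..n}$, the identity \eqref{eq:kl} gives
$$
d_n(\nu,\rho)=\E_\nu\log\frac{\nu(x_{1..n})}{\rho(x_{1..n})}\le d_n(\nu,\mu_k)-\log w_k
$$
for every $\nu\in\P$ and every $k$. Dividing by $n$ and taking $\limsup$, the additive constant $-\log w_k$ disappears, so $l_{KL}(\nu,\rho)\le l_{KL}(\nu,\mu_k)$ for all $\nu$ and all $k$ simultaneously.

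The second step transfers this from the $\mu_k$ to an arbitrary $\mu\in\C$ using $d_\infty$-density. Given $\mu\in\C$ and $\epsilon>0$, pick $\mu_k$ with $d_\infty(\mu,\mu_k)<\epsilon$. By the definition of $d_\infty$, for all $n$ large enough we have $|\log(\mu(x_{1..n})/\mu_k(x_{1..n}))|\le n\epsilon$ \emph{uniformly} over all $x_{1..n}\in\X^n$; this uniformity over individual sequences is exactly what the supremum in \eqref{eq:dinf} buys us, and it bounds the expectation regardless of $\nu$:
$$
|d_n(\nu,\mu_k)-d_n(\nu,\mu)|=\Big|\E_\nu\log\frac{\mu(x_{1..n})}{\mu_k(x_{1..n})}\Big|\le n\epsilon .
$$
Hence $l_{KL}(\nu,\mu_k)\le l_{KL}(\nu,\mu)+\epsilon$, and combined with the first step, $l_{KL}(\nu,\rho)\le l_{KL}(\nu,\mu)+\epsilon$. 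As $\epsilon$ is arbitrary and $\rho$ is fixed, $l_{KL}(\nu,\rho)\le l_{KL}(\nu,\mu)$ for every $\nu\in\P$ and $\mu\in\C$, so $\rho$ solves Problem~3.

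For part (ii), I would exhibit an uncountable family that is $d_\infty$-discrete yet admits a trivial solution. Fix $\X=\{0,1\}$ and for each $t\in\X^\infty$ let $\mu_t$ be Bernoulli $1/2$ biased \emph{away} from $t$: $\mu_t(x_n=t_n\mid t_{1..n-1}):=1/3$, and $\mu_t(x_n=0\mid x_{1..n-1}):=1/2$ whenever $x_{1..n-1}\ne t_{1..n-1}$. Let $\gamma$ be the Bernoulli $1/2$ measure, $\gamma(x_{1..n})=2^{-n}$. I would then verify two claims. First, $\C:=\{\mu_t:t\in\X^\infty\}$ is not $d_\infty$-separable: for $t\ne s$ first differing at position $m$, evaluating the log-ratio along the sequence $t$ gives $\tfrac1n\log(\mu_t(t_{1..n})/\mu_s(t_{1..n}))\to\log(2/3)$, so $d_\infty(\mu_t,\mu_s)\ge\log(3/2)>0$, and an uncountable set with pairwise distances bounded below cannot be separable. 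Second, $\gamma$ solves Problem~3 for $\C$: a direct computation shows $\mu_t(x_{1..n})\le\tfrac43\,\gamma(x_{1..n})$ for \emph{every} $x_{1..n}$, whence $\log(\gamma(x_{1..n})/\mu_t(x_{1..n}))\ge-\log\tfrac43$ for all sequences, so $d_n(\nu,\mu_t)\ge d_n(\nu,\gamma)-\log\tfrac43$ for every $\nu$. Dividing by $n$ and taking $\limsup$ kills the constant, giving $l_{KL}(\nu,\gamma)\le l_{KL}(\nu,\mu_t)$ for all $\nu\in\P$ and all $t$.

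I expect the main obstacle to be the transfer step of part (i): one must check that $d_\infty$-closeness controls $|d_n(\nu,\mu)-d_n(\nu,\mu_k)|$ uniformly in $\nu$, which is precisely why the worst-case-over-sequences distance $d_\infty$, rather than a $\nu$-averaged quantity, is the correct hypothesis; under a weaker distance the single mixture $\rho$ could fail to dominate the non-enumerated members of $\C$. In part (ii) the delicate point is arranging the family to be simultaneously $d_\infty$-spread out (which requires large log-ratios on sparse, low-probability sequences) and uniformly no better than $\gamma$ in the Ces\`aro--KL sense (which forbids any $\mu_t$ from exceeding $\gamma$ by more than a constant factor on any sequence). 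Biasing away from $t$ rather than towards it, as in the example of Theorem~\ref{th:comp}, is what reconciles these two requirements, since a constant multiplicative gap contributes only $O(1)$ to $d_n$ and therefore vanishes in the asymptotic average.
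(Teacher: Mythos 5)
Your proof of part (i) is correct and essentially the same as the paper's: both take the Bayes mixture $\rho=\sum_k w_k\mu_k$ over a countable $d_\infty$-dense subset, use $\rho\ge w_k\mu_k$ to get $d_n(\nu,\rho)\le d_n(\nu,\mu_k)-\log w_k$, and transfer to an arbitrary $\mu\in\C$ via the uniform-over-sequences bound $|\E_\nu\log(\mu(x_{1..n})/\mu_k(x_{1..n}))|\le\sup_{x_{1..n}}|\log(\mu(x_{1..n})/\mu_k(x_{1..n}))|\le n\epsilon$; the paper merely folds your two steps into a single chain of inequalities. Part (ii) is also correct but proceeds by a genuinely different example. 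The paper takes $\C$ to be an uncountable family of \emph{deterministic} measures (sequences with fewer than $\sqrt{n}$ zeros among the first $n$ symbols), for which distinct elements satisfy $d_\infty=\infty$, and uses the predictor $\nu(x_n=0)=1/n$; the Problem~3 claim there is immediate because the loss $l_{KL}(\cdot,\mu)$ of a deterministic expert is infinite except essentially on its own sequence. Your family $\{\mu_t\}$ of measures biased \emph{away} from $t$ instead achieves non-separability with a finite pairwise gap $\log(3/2)$ (your computation of the ratio $\mu_t(t_{1..n})/\mu_s(t_{1..n})=\tfrac12(2/3)^{n-m}$ is right) and reduces Problem~3 to the uniform domination $\mu_t(x_{1..n})\le\tfrac43\gamma(x_{1..n})$, which holds as you claim and makes $l_{KL}(\nu,\gamma)\le l_{KL}(\nu,\mu_t)$ follow for entirely elementary reasons, a constant multiplicative gap contributing only $O(1)$ to $d_n$. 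Your version is slightly more informative in that it exhibits a non-separable $\C$ of fully supported, mutually absolutely continuous measures that still admits a Problem~3 solution, whereas the paper's example is shorter because the expert losses are degenerate; both are valid.
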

\begin{proof}
For the first statement,  let $\C$ be separable and let $(\mu_k)_{k\in\N}$ be a dense countable subset of $\C$. Define $\nu:=\sum_{k\in\N}w_k\mu_k$, 
where $w_k$ are any positive summable weights. Fix any measure $\tau$ and any $\mu\in\C$.
We will show that $\limsup_{n\to\infty}{1\over n}d_n(\tau,\nu)\le \limsup_{n\to\infty}{1\over n}d_n(\tau,\mu)$.
For every $\epsilon$, find such a $k\in\N$ that $d_\infty(\mu,\mu_k)\le\epsilon$. We have
\begin{multline*}
d_n(\tau,\nu)\le d_n(\tau,w_k\mu_k) =\E_\tau\log\frac{\tau(x_{1..n})}{\mu_k(x_{1..n})}-\log w_k
\\= \E_\tau\log\frac{\tau(x_{1..n})}{\mu(x_{1..n})} + \E_\tau\log\frac{\mu(x_{1..n})}{\mu_k(x_{1..n})} -\log w_k\\
\le d_n(\tau,\mu)+\sup_{x_{1..n}\in\X^n}\log\left|\frac{\mu(x_{1..n})}{\mu_k(x_{1..n})}\right| -\log w_k.
\end{multline*}
From this, dividing by $n$ taking $\limsup_{n\to\infty}$ on both sides, we conclude 
$$
\limsup_{n\to\infty}{1\over n}d_n(\tau,\nu)\le \limsup_{n\to\infty}{1\over n}d_n(\tau,\mu) + \epsilon.
$$
Since this holds for every $\epsilon>0$ the first statement is proven.

The second statement is proven by the following example. 
 Let $\C$ be the set of all deterministic sequences (measures concentrated on just one sequence) such that the number of 0s in the 
 first $n$ symbols is less than $\sqrt{n}$, for all $n\in\N$. Clearly, this set is uncountable. 
It is easy to check that $\mu_1\ne\mu_2$ implies $d_\infty(\mu_1,\mu_2)=\infty$ for every $\mu_1,\mu_2\in\C$, but 
 the predictor $\nu$, given by $\nu(x_n=0)=1/n$ independently for different $n$, predicts every $\mu\in\C$ in expected average KL divergence. Since all elements
of $\C$ are deterministic, $\nu$ is also a solution to Problem~3 for $\C$.
\end{proof}

Although simple, Theorem~\ref{th:dinf} can be used to establish the existence of a solution to Problem~3 for an important 
class of process measures: that of all processes with  finite memory, as the next theorem shows. Results similar to Theorem~\ref{th:mark} are known 
in different settings, e.g., \citep{Ziv:78, BRyabko:84, Cesa:99} and others. 

\begin{theorem}\label{th:mark}
 There exists a solution to Problem~3 for prediction in expected average KL divergence for the set of all finite-memory process measures $\mathcal M:=\cup_{k\in\N}\mathcal M_k$.
\end{theorem}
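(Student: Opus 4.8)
The plan is to apply Theorem~\ref{th:dinf}(i): since separability in $d_\infty$ guarantees a solution to Problem~3, it suffices to exhibit a countable subset of $\mathcal M$ that is dense in $\mathcal M$ with respect to $d_\infty$. The natural candidate is the family of all finite-memory Markov measures with rational transition probabilities, i.e.\ $\bigcup_{k\in\N}\mathcal M_k^\Q$, where $\mathcal M_k^\Q\subset\mathcal M_k$ consists of those $k$-order chains all of whose conditional probabilities $\mu(x_{k+1}=a\mid x_{1..k}=a_{1..k})$ lie in $\Q$ (we allow the value $0$, so that deterministic transitions are included). For each fixed $k$ there are only $|\X|^k$ contexts and hence finitely many possible zero-patterns of the kernel, and for each pattern the rational chains with that pattern form a countable set; the union over $k\in\N$ is therefore countable.

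First I would fix $\mu\in\mathcal M_k$ and $\epsilon>0$ and construct an approximant $\mu'\in\mathcal M_k^\Q$ with $d_\infty(\mu,\mu')\le\epsilon$. The crucial point is that $\mu'$ must share \emph{exactly} the support of $\mu$ at the level of transitions: I choose $\mu'$ so that $\mu'(a\mid a_{1..k})=0$ precisely when $\mu(a\mid a_{1..k})=0$, and on the remaining (positive) transitions I pick rationals that are multiplicatively $\epsilon$-close to the values of $\mu$, renormalising each row so that it sums to $1$. Matching the zero-pattern is what keeps $d_\infty$ finite: whenever a sequence $x_{1..n}$ uses a forbidden transition, both $\mu(x_{1..n})$ and $\mu'(x_{1..n})$ vanish, and the convention $\log0/0:=0$ makes that sequence contribute nothing.

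The estimate of $d_\infty(\mu,\mu')$ then reduces to a per-transition bound. For any sequence staying in the common support one has $\log\frac{\mu(x_{1..n})}{\mu'(x_{1..n})}=\log\frac{\mu(x_{1..k})}{\mu'(x_{1..k})}+\sum_{t=k+1}^n\log\frac{\mu(x_t\mid x_{t-k..t-1})}{\mu'(x_t\mid x_{t-k..t-1})}$; the initial-block term is $O(1)$ and disappears under $\tfrac1n\limsup_n$, while each of the at most $n$ remaining terms is bounded in absolute value by $\max_{(a_{1..k},a)}\bigl|\log\frac{\mu(a\mid a_{1..k})}{\mu'(a\mid a_{1..k})}\bigr|$, the maximum taken over the finitely many positive transitions. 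Hence $d_\infty(\mu,\mu')$ is at most this maximum, which by construction is $\le\epsilon$. I would also remark that $\mu'$, defined through its kernel, becomes a genuine element of $\mathcal M_k$ once equipped with one of its stationary initial distributions (which exists for any finite-state chain); the support of the relevant marginals is governed by the zero-pattern of the kernel and therefore matches that of $\mu$, while the precise initial distribution only affects the vanishing $O(1)$ term.

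The main obstacle is the careful treatment of zero transition probabilities. Finite-memory measures with degenerate transitions are abundant in $\mathcal M$ (for instance the periodic chains), and a full-support rational chain would be at infinite $d_\infty$-distance from such a measure; one must therefore approximate within the correct support and invoke $\log0/0=0$ on exactly the null sequences, so that the distance stays finite. Once the zero-pattern is matched, the remaining estimate is routine, and the fact that for each $k$ there are only finitely many contexts and finitely many support patterns is precisely what keeps the approximating family countable, completing the verification of $d_\infty$-separability.
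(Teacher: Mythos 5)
Your proof follows essentially the same route as the paper's: reduce to Theorem~\ref{th:dinf}(i) by showing each $\mathcal M_k$ is separable in $d_\infty$, with the dense countable family given by chains with rational transition probabilities, and the key estimate being the per-transition bound $\frac1n\bigl|\log\frac{\mu(x_{1..n})}{\mu'(x_{1..n})}\bigr|\le o(1)+\max_{(a_{1..k},a)}\bigl|\log\frac{\mu(a\mid a_{1..k})}{\mu'(a\mid a_{1..k})}\bigr|$. Your explicit treatment of degenerate transitions (matching the zero-pattern of the kernel so that $\log 0/0:=0$ applies exactly on the null sequences) is in fact more careful than the paper, which only states its continuity bound for sequences on which both measures are positive.

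One claim in your argument is not quite right and needs a small patch: for a \emph{reducible} kernel, the support of a stationary initial distribution is \emph{not} governed by the zero-pattern of the kernel alone --- it can be any union of recurrent communicating classes. Concretely, if $\mu=\tfrac13\delta_{000\dots}+\tfrac23\delta_{111\dots}\in\mathcal M_1$ and you equip the (rational, deterministic) kernel with the stationary distribution concentrated on state $0$, you get $\mu'=\delta_{000\dots}$ and $d_\infty(\mu,\mu')=\infty$. So ``one of its stationary initial distributions'' must be replaced by ``a stationary initial distribution supported on the same union of recurrent classes as that of $\mu$'' (a rational such distribution always exists, as a rational mixture of the ergodic components, which are themselves rational for a rational kernel). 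The same oversight affects your countability count: a reducible rational kernel carries a continuum of stationary laws, so $\mathcal M_k^\Q$ must be indexed by pairs (rational kernel, rational stationary initial distribution), not by the kernel alone. Both points are repaired in one line and do not change the structure of the argument; the paper itself glosses over this initial-distribution issue entirely.
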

\begin{proof}
 We will show that the set $\mathcal M$ is separable with respect to $d_\infty$. Then the statement will follow from Theorem~\ref{th:dinf}.
It is enough to show  that each set $\mathcal M_k$ is separable with respect to $d_\infty$. 

For simplicity, assume that the alphabet is binary ($|\X|=2$; the general case is analogous).
 Observe that the family $\mathcal M_k$ of $k$-order stationary binary-valued Markov
processes is parametrized by $|\X|^{k}$ $[0,1]$-valued parameters: probability of observing $0$ after observing $x_{1..k}$, for each $x_{1..k}\in\X^k$.
Note that this parametrization is continuous (as a mapping from the parameter space with the Euclidean topology to $\mathcal M_k$ with the
topology of $d_\infty$). 
Indeed,   for any $\mu_1,\mu_2\in\mathcal M_k$ 
and every $x_{1..n}\in\X^n$ such that $\mu_i(x_{1..n})\ne 0$, $i=1,2$, it is easy to see  that 
\begin{equation}\label{eq:m3}
{1\over n}\left |\log\frac{\mu_1(x_{1..n})}{\mu_2(x_{1..n})}\right|\le \sup_{x_{1..k+1}}{{1\over k+1} \left|\log\frac{\mu_1(x_{1..k+1})}{\mu_2(x_{1..k+1})}\right|},
\end{equation}
so that the right-hand side of~(\ref{eq:m3}) also upper-bounds $d_\infty(\mu_1,\mu_2)$, implying continuity of the parametrization.

It follows that  the set  $\mu^k_q$, $q\in Q^{|\X|^k}$  of all stationary $k$-order Markov processes with rational 
values of all the parameters ($Q:=\mathbb Q\cap[0,1]$) 
is dense in $\mathcal M_k$, proving the separability of the latter set.
\end{proof}

Another important example is the set of all stationary process measures $\S$. This example also illustrates the difference between the prediction problems 
that we consider. For this set a solution to Problem~1 was given in \citep{BRyabko:88}. 
In contrast, here we show that there is no solution to  Problem~3 for $\S$.
\begin{theorem}\label{th:st}
 There is no solution to Problem~3 for the set of all stationary processes $\S$. 
\end{theorem}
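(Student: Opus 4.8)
The plan is to argue by contradiction: assume some $\rho\in\mathcal P$ solves Problem~3 for $\S$, and produce a single stationary measure $\mu$ together with a single (degenerate) measure $\nu$ witnessing $l_{KL}(\nu,\rho)>l_{KL}(\nu,\mu)$. First I would record a structural constraint on any such $\rho$: taking $\nu=\mu$ for an arbitrary $\mu\in\S$ in the defining inequality $l_{KL}(\nu,\rho)\le l_{KL}(\nu,\mu)$ and using $d_n(\mu,\mu)=0$ forces $l_{KL}(\mu,\rho)=0$ for every stationary $\mu$. Thus $\rho$ already predicts every stationary process in expected average KL divergence (consistent with \cite{BRyabko:88}), so any witness $\nu$ must be non-stationary. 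I would look for $\nu$ among the degenerate measures $\mathcal D$: for $\nu=\delta_t$ concentrated on a sequence $t$, identity~(\ref{eq:kl}) gives $\frac1n d_n(\delta_t,\rho)=-\frac1n\log\rho(t_{1..n})$, so the goal becomes to find a stationary ergodic $\mu$ and a sequence $t$ with
\[
\limsup_{n\to\infty}-\tfrac1n\log\rho(t_{1..n}) > \limsup_{n\to\infty}-\tfrac1n\log\mu(t_{1..n}).
\]

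The key analytic tool I would use is that $\rho(x_{1..n})/\mu(x_{1..n})$ is a nonnegative $\mu$-martingale, hence converges $\mu$-almost surely to a finite limit $W$. This has two consequences. On one hand it gives $\liminf_n \frac1n\log\frac{\mu(x_{1..n})}{\rho(x_{1..n})}\ge 0$ $\mu$-a.s., so $\rho$ can never beat $\mu$ on the $\liminf$ of the per-symbol log-loss along $\mu$-typical sequences; the entire gap must live in the $\limsup$, which is exactly where $l_{KL}$ is evaluated. On the other hand, on the event $\{W>0\}$ one has $\frac1n\log\frac{\mu(x_{1..n})}{\rho(x_{1..n})}\to 0$, so no gap can arise there. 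Hence I would take $\mu$ singular with respect to $\rho$ — which is permitted precisely because prediction in average KL, unlike prediction in total variation, does not entail absolute continuity (cf. the discussion of $\getv$ and the examples built from $\mathcal D$). For such $\mu$ the martingale limit is $W=0$ $\mu$-a.s.; the remaining task is to force the ratio to decay \emph{exponentially}, i.e. $\rho(x_{1..n})\le 2^{-\varepsilon n}\mu(x_{1..n})$, along an infinite sequence of scales, on a set of positive $\mu$-measure. Combined with the Shannon--McMillan--Breiman theorem, which makes $-\frac1n\log\mu(x_{1..n})$ converge to the entropy rate $h(\mu)$ on a full-measure set, any $t$ in the (positive-measure) intersection then satisfies $l_{KL}(\delta_t,\rho)\ge h(\mu)+\varepsilon>h(\mu)=l_{KL}(\delta_t,\mu)$, the desired contradiction.

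The heart of the proof, and the step I expect to be the main obstacle, is the construction of such a stationary ergodic $\mu$ for an \emph{arbitrary} given $\rho$. Adaptively defeating a fixed $\rho$ at a sequence of scales is easy if one only wants a non-stationary measure: at scale $N_k$ one steers the conditional distributions toward symbols to which $\rho$ assigns small probability, as in the greedy construction behind Lemma~\ref{th:disc}, guaranteeing $\rho(x_{1..N_k})\le 2^{-\varepsilon N_k}\mu(x_{1..N_k})$. The difficulty is to carry out this scale-by-scale evasion inside a genuinely shift-invariant and ergodic process, so that $\mu\in\S$ and the exponential evasion still occurs infinitely often $\mu$-a.s. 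I would handle this by a block/coding construction in which the adversarial stretches are inserted along a stationary renewal pattern whose gaps grow fast enough that the adversarial stretches have asymptotic density zero — keeping $h(\mu)$, and hence $-\frac1n\log\mu$, under control — yet recur infinitely often; ergodicity would be arranged by randomising the phase and checking that the ratio bound survives stationarisation. Making $\mu$ simultaneously stationary, ergodic, and exponentially evasive infinitely often is the delicate point, and is exactly where the contrast with the finite-memory case of Theorem~\ref{th:mark} arises, since there no such unbounded, scale-dependent evasion of a fixed predictor is available.
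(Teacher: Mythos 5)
Your plan has a genuine and, I believe, fatal gap at exactly the point you identify as "the heart of the proof". You propose to construct, for an arbitrary $\rho$, a stationary ergodic $\mu$ such that $\rho(x_{1..n})\le 2^{-\epsilon n}\mu(x_{1..n})$ at infinitely many scales $N_k$ on a set of positive $\mu$-measure. But this is incompatible with the constraint you yourself derive in your first step, namely that $\rho$ must predict every stationary process, so $\frac1n d_n(\mu,\rho)\to0$. Indeed, writing $B_{N_k}$ for the evasion set, the same Jensen bound used in the paper's appendix (inequality~(\ref{eq:jen})) gives $d_{N_k}(\mu,\rho)\ge \epsilon N_k\,\mu(B_{N_k}) - 1/2$, since the contribution of the complement of $B_{N_k}$ to $\sum\mu\log(\mu/\rho)$ is bounded below by an absolute constant. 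Hence $\mu(B_{N_k})\ge\delta>0$ along a subsequence forces $\limsup\frac1n d_n(\mu,\rho)\ge\delta\epsilon>0$, a contradiction. The same tension shows up in your block construction: to gain $\epsilon N_k$ in $\log(\mu/\rho)$ the adversarial stretches must occupy a positive fraction of $[1,N_k]$, which is exactly what your density-zero requirement forbids. The root cause is your insistence that the witness sequence $t$ be $\mu$-typical (so that Shannon--McMillan--Breiman gives $l_{KL}(\delta_t,\mu)=h(\mu)$): on $\mu$-typical sequences a predictor of $\mu$ cannot be beaten by a linear amount on non-vanishing measure, so you have set yourself a strictly harder (and possibly false) statement to prove.

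The paper's proof avoids all of this by making the witness sequences \emph{atypical} for the reference measures, and by fixing the hard subclass of $\S$ in advance, independently of $\rho$. It takes the ternary alphabet $\X=\{a,0,1\}$ and a single stationary ergodic countable-state Markov chain that climbs $0\to1\to2\to\dots$ with probability $2/3$ and resets to $0$ with probability $1/3$; for each binary sequence $t$ the hidden-Markov image $\mu_t$ outputs $a$ in state $0$ and $t_k$ in state $k$. Each $\mu_t$ is stationary ergodic, and $\mu_t(t_{1..n})\asymp(2/3)^n$ even though $t$ is a $\mu_t$-null (highly atypical) sequence, so $d_n(\delta_t,\mu_t)=n\log\frac32+o(n)$ for \emph{every} binary $t$. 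A solution $\rho$ to Problem~3 would then have to satisfy $\limsup\frac1n d_n(\delta_t,\rho)\le\log\frac32<1$ for all binary $t$, which is impossible by the greedy argument of Lemma~\ref{th:disc} applied to the binary subalphabet (some $t$ has $d_n(\delta_t,\rho)\ge n$). Only this last, easy step is adapted to $\rho$; no stationary measure needs to be built against $\rho$, and no martingale or entropy-rate machinery is required. Your preliminary observations (that $\rho$ must predict all of $\S$, that degenerate $\nu$ are the natural witnesses, and the reduction via identity~(\ref{eq:kl})) are all correct and consistent with the paper; it is the construction of the stationary "evader" that cannot be carried out as stated.
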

\begin{proof}
 This   proof  is based on the construction similar to the one used in \citep{BRyabko:88} 
to demonstrate impossibility of consistent prediction  of stationary processes without Cesaro averaging. 

Let $m$ be a Markov chain with states $0,1,2,\dots$ and state transitions defined as follows.
From each sate $k\in\N\cup\{0\}$ the chain passes to the state $k+1$ with probability 2/3 and to the state 0 with probability 1/3. 
It is easy to see that this chain possesses a unique stationary distribution on the set of states (see, e.g., \citealp{Shiryaev:96}); taken as the initial distribution 
it defines a stationary ergodic process with  values in $\N\cup\{0\}$. Fix the ternary alphabet $\X=\{a,0,1\}$. 
For each sequence $t=t_1,t_2,\dots\in\{0,1\}^\infty$ define the process $\mu_t$ as follows. It is a deterministic function 
of the chain $m$. If the chain is in the state 0 then the process $\mu_t$ outputs $a$; if the chain $m$ is in the state $k>0$ then 
the process outputs $t_k$. That is,  we have defined a hidden Markov process which in the state 0 of the underlying 
Markov chain always outputs  $a$, while in other states it outputs either $0$ or $1$ according to the sequence~$t$. 

To show that there is no solution to Problem 3 for $\S$, we will show that there is no solution to Problem~3 for the smaller
set $\C:=\{\mu_t: t\in\{0,1\}^\infty\}$. Indeed, for any $t\in\{0,1\}^\infty$ we have $d_n(t,\mu_t)= n\log 3/2 + o(n)$. Then if $\rho$ 
is a solution to Problem~3 for $\C$ we should have $\limsup_{n\to\infty}{1\over n} d_n(t,\rho)\le \log 3/2<1$ for every $t\in\mathcal D$,
 which contradicts Lemma~\ref{th:disc}.
\end{proof}

From the proof of Theorem~\ref{th:st} one can see that, in fact, the statement that is proven is stronger: there is no 
solution to Problem~3 for the set of all functions of stationary ergodic countable-state Markov chains. 
We conjecture that a solution to Problem~2 exists for the latter set, but not for the set of all stationary processes.

As we have seen in the statements above, the set of all deterministic measures  $\mathcal D$ 
plays an important role in the analysis of the predictors in the sense of Problem~3.
Therefore, an interesting question is to characterize those sets $\mathcal C$ of measures 
for which there is a predictor $\rho$ that predicts {\em every individual sequence} at least
as well as any measure from $\C$.  Such a characterization can be obtained in terms of Hausdorff dimension,  using a result of \cite{BRyabko:86},
that shows that Hausdorff dimension of a set characterizes the optimal prediction error that can be attained by any predictor.

For a set $A\subset \X^\infty$ denote $H(A)$ its Hausdorff dimension (see, for example, \citep{Billingsley:65} for its definition).

\begin{theorem}
 Let $\C\subset\P$. The following statements are equivalent.
\begin{itemize}
 \item[(i)] There is a measure $\rho\in\P$ that predicts every individual sequence at least as well as the best measure from $\C$:  
  for every $\mu\in\C$ and  every sequence $x_1,x_2,\dots\in\X^\infty$ we have
  \begin{equation}\label{eq:ha}
   \liminf_{n\to\infty}-{1\over n}\log\rho(x_{1..n})\le \liminf_{n\to\infty}-{1\over n}\log\mu(x_{1..n}).
  \end{equation}
 \item[(ii)] For every $\alpha\in[0,1]$ the Hausdorff dimension of the set of sequences on which the average prediction error 
 of the best measure in $\C$ is not greater than $\alpha$ is bounded by $\alpha/\log|\X|$:
 \begin{equation}\label{eq:ha2}
  H(\{x_1,x_2,\dots\in\X^\infty:\inf_{\mu\in\C}\liminf_{n\to\infty}-{1\over n}\log\mu(x_{1..n})\le\alpha\})\le \alpha/\log|\X|.
 \end{equation}
\end{itemize}
\end{theorem}
\begin{proof}
 The implication $(i)\Rightarrow(ii)$ follows directly from \citep{BRyabko:86} where it is shown that for every 
measure $\rho$ one must have $H(\{x_1,x_2,\dots\in\X^\infty:\liminf_{n\to\infty}-{1\over n}\log\rho(x_{1..n})\le\alpha\})\le \alpha/\log|\X|$.

To show the opposite implication, we again refer to \citep{BRyabko:86}: for every set $A\subset\X^\infty$ there is a measure $\rho_A$ such 
that   
\begin{equation}\label{eq:hh}
\liminf_{n\to\infty}-{1\over n}\log\rho_A(x_{1..n})\le H(A)\log|\X|.
\end{equation}
 For each $\alpha\in[0,1]$ define
$
 A_\alpha:= \{x_1,x_2,\dots\in\X^\infty:\inf_{\mu\in\C}\liminf_{n\to\infty}-{1\over n}\log\mu(x_{1..n})\le \alpha\}).
$ 
By assumption, $H(A_\alpha)\le\alpha/\log|X|$, so that from~(\ref{eq:hh}) for all $x_1,x_2,\dots\in A_\alpha$ we obtain
\begin{equation}\label{eq:hh2}
\liminf_{n\to\infty}-{1\over n}\log\rho_A(x_{1..n})\le \alpha.
\end{equation}
Furthermore, define $\rho:=\sum_{q\in Q}w_q\rho_{A_q}$, 
where $Q=[0,1]\cap\mathbb Q$ is the set of rationals in $[0,1]$ and $(w_q)_{q\in  Q}$ is any sequence of positive reals
 satisfying $\sum_{q\in Q}w_q=1$. For every $\alpha\in[0,1]$ let $q_k\in Q$, $k\in\N$ be such a sequence that $0\le q_k -\alpha\le1/k$. 
 Then, for every $n\in\N$ and every $x_1,x_2,\dots\in A_{q_k}$ we have 
$$
 -{1\over n}\log\rho(x_{1..n})\le -{1\over n}\log\rho_q(x_{1..n}) -\frac{\log w_{q_k}}{n}.
$$
From this and~(\ref{eq:hh2}) we get
$$
\liminf_{n\to\infty}-{1\over n}\log\rho(x_{1..n})\le \liminf_{n\to\infty} \rho_{q_k}(x_{1..n})+1/k\le q_k+1/k.
$$ Since this holds 
for every $k\in\N$, it follows that for all $x_1,x_2,\dots\in\cap_{k\in\N} A_{q_k}=A_\alpha$ we have
$$
 \liminf_{n\to\infty}-{1\over n}\log\rho(x_{1..n})\le \inf_{k\in\N}(q_k +1/k)=\alpha,
$$
which completes the proof of the implication $(ii)\Rightarrow(i)$.
\end{proof}

\section{Discussion}
It has been long realized that the so-called probabilistic and agnostic (adversarial, non-stochastic, deterministic) settings 
of the problem of sequential prediction are strongly related. This  has been  most evident from looking at the solutions
to these problems, which are usually based on the same ideas. Here we have proposed a formulation of the agnostic 
problem as a non-realizable case of the probabilistic problem. While being very close to the traditional one,  this 
 setting allows us to directly compare the two problems. As a  somewhat surprising result, we can see that 
whether the two problems are different depends on the measure of performance chosen: in the case of prediction in total 
variation distance they coincide, while in the case of prediction in expected average KL divergence they are different.
In the latter case, the distinction becomes particularly apparent on the example of stationary processes:
while a solution to the realizable  problem has long been known, here we have shown that there is no solution 
to the agnostic version of this problem. 
This formalization also allowed us to introduce another problem that lies in between the realizable and  the fully agnostic problems: given a class of process measures 
$\C$, find a predictor whose  predictions are  asymptotically correct for  every measure for which at least one of the measures in $\C$ gives
 asymptotically correct predictions (Problem~2).
This problem is less restrictive then the fully agnostic  one  (in particular, it is not concerned 
with the behaviour of a predictor on every deterministic sequence)  but at the same time the solutions to this problem 
have performance guarantees far outside the model class considered.

In essence, the formulation of Problem~2 suggests to assume that we have a set of models one of which is good enough to make predictions, with the goal 
of combining the predictive powers of these models.
This is perhaps a  good compromise between making modelling assumptions on the data (the data is generated by one of the models we have) and
the  fully agnostic, worst-case, setting.

Since the problem formulations presented here are mostly new (at least,  in such a general form), it is not 
surprising that there are many questions left open. A promising route  to obtain new results seems
to be to first analyse the case of prediction in total variation, which amounts to studying 
the relation of absolute continuity and singularity of probability measures,  and then to try and find analogues
in less restrictive (and thus more interesting and difficult) cases of predicting only the next observation, possibly 
with Cesaro averaging. This is the approach that we took in this work. Here it is interesting to find 
properties common to all or most of the prediction problems (in total variation as well as with respect to other measures of the performance), if it 
is at all possible.
For example,  the ``countable Bayes'' property of Theorem~\ref{th:2}, that states that if there is a solution to a given sequence
prediction problem for a set $\C$ then a solution can be obtained as a mixture over a suitable countable subset of $\C$, holds for Problems~1--3 in total
variation, and for Problems~1 and~2 in KL divergence; however we conjecture that it does not hold for the Problem~3 in KL divergence.

It may also be interesting to study algebraic properties of the relation $\geklz$ that arises when studying Problem~2.
We have show that it shares some properties with the relation $\getv$ of absolute continuity. Since the latter characterizes
prediction  in total variation and the former characterizes prediction in KL divergence (in the sense of Problem~2), which is much weaker, 
it would  be interesting to see exactly what properties the two relations share.

Another direction for future research concerns finite-time performance analysis. In this work we have adopted the asymptotic approach to the prediction problem, 
ignoring the behaviour of predictors before asymptotic. While for prediction in total variation it is a natural choice, for other measures of performance, 
including average KL divergence, it is clear that Problems 1-3 admit non-asymptotic formulations. 
It is also interesting  what are  the relations between  performance guarantees that can be obtained in non-asymptotic formulations of Problems~1--3.
 

\section{Proof of Theorem~\ref{th:2}}\label{s:pr}
\begin{proof}
Define the sets $C_\mu$ as the set of all measures 
$\tau\in\P$ such that $\mu$ predicts $\tau$ in expected average KL divergence.
Let $\C^+:=\cup_{\mu\in\C} C_\mu$. For each $\tau\in\C^+$ let $p(\tau)$ be any (fixed) 
$\mu\in\C$ such that $\tau\in C_\mu$. In other words,  $\C^+$ is the set of all measures 
that are predicted by some of the measures in $\C$, and for each measure $\tau$ in $\C^+$ we 
designate one ``parent'' measure $p(\tau)$ from $\C$ such that $p(\tau)$ predicts $\tau$.

Define the weights $w_k:=1/k(k+1)$, for all $k\in\N$. 

\noindent{\em Step 1.} 
 For each $\mu\in \C^+$ let $\delta_n$ be any monotonically increasing function such that $\delta_n(\mu)=o(n)$ and $d_n(\mu,p(\mu))=o(\delta_n(\mu))$.
Define the sets 
\begin{equation}\label{eq:U}
U_\mu^n:=\left\{x_{1..n}\in \X^n: \mu(x_{1..n})\ge{1\over n}\rho(x_{1..n})\right\},
\end{equation} 
\begin{equation}\label{eq:V}
V_\mu^n:=\left\{x_{1..n}\in \X^n: p(\mu)(x_{1..n})\ge2^{-\delta_n(\mu)}\mu(x_{1..n})\right\},
\end{equation} 
and
\begin{equation}\label{eq:t}
T_\mu^n:=U_\mu^n\cap V_\mu^n.
\end{equation} 
We will upper-bound $\mu(T_\mu^n)$.
First, using Markov's inequality, we derive 
\begin{equation}\label{eq:markk}
\mu(\X^n\backslash U_\mu^n) 
 = \mu \left(\frac {\rho(x_{1..n})}{\mu(x_{1..n})} > n\right)\le {1\over n} E_\mu \frac {\rho(x_{1..n})}{\mu(x_{1..n})}={1\over n}.
\end{equation}
Next, observe that for every $n\in\N$ and every set $A\subset \X^n$, using Jensen's inequality we can obtain
\begin{multline}\label{eq:jen}
-\sum_{x_{1..n}\in A}\mu(x_{1..n})\log\frac{\rho(x_{1..n})}{\mu(x_{1..n})}
=  -\mu(A)\sum_{x_{1..n}\in A}{1\over\mu(A)}\mu(x_{1..n})\log\frac{\rho(x_{1..n})}{\mu(x_{1..n})}
\\
\ge -\mu(A)\log{\rho(A)\over\mu(A)} \ge -\mu(A)\log\rho(A) -{1\over2}. 
\end{multline}
Moreover,
\begin{multline*}\label{eq:anoth}
 d_n(\mu,p(\mu))=   -\sum_{x_{1..n}\in\X^n\backslash V_\mu^n }\mu(x_{1..n})\log\frac{p(\mu)(x_{1..n})}{\mu(x_{1..n})}  
\\
-\sum_{x_{1..n}\in V_\mu^n}\mu(x_{1..n})\log\frac{p(\mu)(x_{1..n})}{\mu(x_{1..n})} \ge \delta_n(\mu_n)\mu(\X^n\backslash V_\mu^n)-1/2,
\end{multline*}
where in the inequality we have used~(\ref{eq:V}) for the first summand  and~(\ref{eq:jen}) for the second.
Thus,
\begin{equation}\label{eq:del}
\mu(\X^n\backslash V_\mu^n) \le \frac{d_n(\mu,p(\mu))+1/2}{\delta_n(\mu)}=o(1).
\end{equation}
From~(\ref{eq:t}), (\ref{eq:markk}) and~(\ref{eq:del}) we conclude
\begin{equation}\label{eq:mark}
\mu(\X^n\backslash T_\mu^n) \le \mu(\X^n\backslash V_\mu^n) + \mu(\X^n\backslash U_\mu^n) =o(1).
\end{equation}

{\em Step 2n: a countable cover, time $n$.}
Fix an $n\in\N$. Define $m^n_1:=\max_{\mu\in\C}\rho(T_\mu^n)$ (since $\X^n$ are finite all suprema are reached). 
 Find any $\mu^n_1$ such that $\rho^n_1(T_{\mu^n_1}^n)=m^n_1$ and let
$T^n_1:=T^n_{\mu^n_1}$. For $k>1$, let $m^n_k:=\max_{\mu\in\C}\rho(T_\mu^n\backslash T^n_{k-1})$. If $m^n_k>0$, let $\mu^n_k$ be any $\mu\in\C$ such 
that $\rho(T_{\mu^n_k}^n\backslash T^n_{k-1})=m^n_k$, and let $T^n_k:=T^n_{k-1}\cup T^n_{\mu^n_k}$; otherwise let $T_k^n:=T_{k-1}^n$. Observe that 
(for each $n$) there is only a finite number of positive $m_k^n$,
since the set $\X^n$ is finite; let $K_n$ be the largest index $k$ such that $m_k^n>0$. Let 
\begin{equation}\label{eq:nun}
\nu_n:=\sum_{k=1}^{K_n} w_kp(\mu^n_k).
\end{equation}
As a result of this construction, for every $n\in\N$ every $k\le K_n$ and every  $x_{1..n}\in T^n_k$ 
using the definitions~(\ref{eq:t}), (\ref{eq:U}) and~(\ref{eq:V})  we obtain
\begin{equation}\label{eq:ext}
\nu_n(x_{1..n})\ge w_k{1\over n}2^{-\delta_n(\mu)}\rho(x_{1..n}).
\end{equation}

{\em Step 2: the resulting predictor.}
Finally, define 
\begin{equation}\label{eq:nu}
\nu:={1\over 2}\gamma+{1\over2}\sum_{n\in\N}w_n\nu_n,
\end{equation}
 where $\gamma$ is the i.i.d.\ measure with equal probabilities of all $x\in\X$ 
(that is, $\gamma(x_{1..n})=|\X|^{-n}$ for every $n\in\N$ and every $x_{1..n}\in\X^n$). 
We will show that $\nu$  predicts every $\mu\in\C^+$, and 
then in the end of the proof (Step~r) we will show how to replace $\gamma$ by a combination of a countable set of elements of $\C$ (in fact, $\gamma$ 
is just a regularizer which ensures that $\nu$-probability of any word is never too close to~0). 

{\em Step 3: $\nu$ predicts every $\mu\in\C^+$.}
Fix any $\mu\in\C^+$. 
Introduce the parameters $\epsilon_\mu^n\in(0,1)$, $n\in\N$, to be defined later, and let $j_\mu^n:=1/\epsilon_\mu^n$.
Observe that $\rho(T^n_k\backslash T^n_{k-1})\ge \rho(T^n_{k+1}\backslash T^n_k)$, for any $k>1$ and any $n\in\N$, by definition of these sets.
Since the sets $T^n_k\backslash T^n_{k-1}$, $k\in\N$ are disjoint, we obtain $\rho(T^n_k\backslash T^n_{k-1})\le 1/k$. Hence,  $\rho(T_\mu^n\backslash T_j^n)\le \epsilon_\mu^n$ for some $j\le j_\mu^n$,
since  otherwise $m^n_j=\max_{\mu\in\C}\rho(T_\mu^n\backslash T^n_{j_\mu^n})> \epsilon_\mu^n$ so that  $\rho(T_{j_\mu^n+1}^n\backslash T^n_{j_\mu^n}) > \epsilon_\mu^n=1/j_\mu^n$, which is a contradiction. 
Thus,   
\begin{equation}\label{eq:tm}
\rho(T_\mu^n\backslash T_{j_\mu^n}^n)\le \epsilon_\mu^n.
\end{equation}
We can upper-bound $\mu(T_\mu^n\backslash T^n_{j^n_\mu})$ as follows. 
First, observe that
\begin{multline}\label{eq:mut}
d_n(\mu,\rho) 
=   -\sum_{x_{1..n}\in T^n_\mu\cap T^n_{j^n_\mu}}\mu(x_{1..n})\log\frac{\rho(x_{1..n})}{\mu(x_{1..n})} 
\\
-\sum_{x_{1..n}\in T^n_\mu\backslash  T^n_{j^n_\mu}}\mu(x_{1..n})\log\frac{\rho(x_{1..n})}{\mu(x_{1..n})} \\- \sum_{x_{1..n}\in \X^n\backslash T^n_\mu}\mu(x_{1..n})\log\frac{\rho(x_{1..n})}{\mu(x_{1..n})}
\\
=
I+II+III.
\end{multline}
Then, from~(\ref{eq:t}) and~(\ref{eq:U}) we get 
\begin{equation}\label{eq:e1}
I\ge -\log n.
\end{equation}
From~(\ref{eq:jen}) and~(\ref{eq:tm})
we get 
\begin{equation}\label{eq:e2}
II
\ge  -\mu(T_\mu^n\backslash T^n_{j^n_\mu}) \log\rho(T_\mu^n\backslash T^n_{j^n_\mu})- 1/2
\ge -\mu(T_\mu^n\backslash T^n_{j^n_\mu}) \log \epsilon_\mu^n - 1/2.
\end{equation}
Furthermore,  
\begin{multline}\label{eq:e3}
III\ge \sum_{x_{1..n}\in \X^n\backslash T^n_\mu}\mu(x_{1..n})\log\mu(x_{1..n}) \\
\ge \mu(\X^n\backslash T^n_\mu)\log\frac{\mu(\X^n\backslash T^n_\mu)}{|\X^n\backslash T^n_\mu|}\ge -{1\over2} - \mu(\X^n\backslash T^n_\mu)n\log|\X|,
\end{multline} 
where the first inequality is obvious, in the second inequality we have used the fact that entropy is maximized when all events are equiprobable 
and in the third one we used $|\X^n\backslash T^n_\mu|\le|\X|^n$.
Combining~(\ref{eq:mut}) with the bounds~(\ref{eq:e1}), (\ref{eq:e2}) and~(\ref{eq:e3})  we obtain 
\begin{equation*}
d_n(\mu,\rho) \ge -\log n  -\mu(T_\mu^n\backslash T^n_{j^n_\mu}) \log \epsilon_\mu^n  - 1 - 
  \mu(\X^n\backslash T^n_\mu)n\log|\X|,
\end{equation*}
so that
\begin{equation}\label{eq:mu2}
 \mu(T_\mu^n\backslash T^n_{j^n_\mu}) \le {1\over-\log \epsilon_\mu^n}\Big(d_n(\mu,\rho) +\log n  +1 + 
     \mu(\X^n\backslash T^n_\mu)n\log|\X| \Big).
\end{equation}
From the fact that $d_n(\mu,\rho)=o(n)$ and~(\ref{eq:mark}) it follows that the term in brackets is $o(n)$, so that 
 we can define the parameters $\epsilon^n_\mu$ in such a way that $-\log \epsilon^n_\mu=o(n)$ while
at the same time  the bound~(\ref{eq:mu2}) gives $\mu(T_\mu^n\backslash T^n_{j^n_\mu})=o(1)$. Fix such a choice of $\epsilon^n_\mu$.
Then,   using~(\ref{eq:mark}),  we  conclude
\begin{equation}\label{eq:xt}
\mu(\X^n\backslash T^n_{j^n_\mu})\le \mu(\X^n\backslash T^n_{\mu})+ \mu(T^n_{\mu}\backslash T^n_{j^n_\mu}) =o(1).
\end{equation}

We proceed with the proof of $d_n(\mu,\nu)=o(n)$. 
For any $x_{1..n}\in T^n_{j_\mu^n}$  we have
\begin{equation}\label{eq:i}
\nu(x_{1..n})\ge {1\over 2}w_n\nu_n(x_{1..n})
\ge{1\over 2}w_n w_{j_{\mu}^n} {1\over n}2^{-\delta_n(\mu)}\rho(x_{1..n})
\ge\frac{w_n}{4n}(\epsilon_\mu^n)^22^{-\delta_n(\mu)}\rho(x_{1..n}),
\end{equation}
where the first inequality follows from~(\ref{eq:nu}), the second from~(\ref{eq:ext}), and in the third we have used $w_{j_{\mu}^n}=1/(j_{\mu}^n)(j_{\mu}^n+1)$
and  $j_\mu^n=1/\epsilon^\mu_n$.
 Next we use the decomposition
\begin{equation}\label{eq:12}
d_n(\mu,\nu)= -\sum_{x_{1..n}\in T^n_{j_\mu^n}}\mu(x_{1..n})\log\frac{\nu(x_{1..n})}{\mu(x_{1..n})} 
- \sum_{x_{1..n}\in \X^n\backslash T^n_{j_\mu^n}}\mu(x_{1..n})\log\frac{\nu(x_{1..n})}{\mu(x_{1..n})}  = I + II.
\end{equation}
From~(\ref{eq:i})  we find 
\begin{multline}\label{eq:1}
I\le -\log\left(\frac{w_n}{4n}(\epsilon_\mu^n)^22^{-\delta_n(\mu)}\right)  - \sum_{x_{1..n}\in T^n_{j_\mu^n}}\mu(x_{1..n})\log\frac{\rho(x_{1..n})}{\mu(x_{1..n})}\hfill\\
=
(o(n) - 2\log\epsilon_\mu^n + \delta_n(\mu)) 
 +\left(d_n(\mu,\rho)+ \sum_{x_{1..n}\in\X^n\backslash T^n_{j_\mu^n}}\mu(x_{1..n})\log\frac{\rho(x_{1..n})}{\mu(x_{1..n})}\right)
\\
\le o(n) -  \sum_{x_{1..n}\in\X^n\backslash T^n_{j_\mu^n}}\mu(x_{1..n})\log\mu(x_{1..n})\\ 
\le o(n)+\mu(\X^n\backslash T^n_{j_\mu^n})n\log|\X|=o(n),
\end{multline}
where in the second inequality we have used $-\log\epsilon_\mu^n=o(n)$, $d_n(\mu,\rho)=o(n)$ and $\delta_n(\mu)=o(n)$, in the last inequality we have again used the fact that the entropy is maximized when all events are equiprobable, 
while the last equality follows from~(\ref{eq:xt}). 
Moreover, from~(\ref{eq:nu}) we find
\begin{equation}\label{eq:2}
II\le \log 2 - \sum_{x_{1..n}\in\X^n\backslash  T^n_{j_\mu^n}}\mu(x_{1..n})\log\frac{\gamma(x_{1..n})}{\mu(x_{1..n})}
\le 1 +n\mu(\X^n\backslash T^n_{j_\mu^n})\log|\X|=o(n),
\end{equation}
where in the last inequality we have used $\gamma(x_{1..n})=|\X|^{-n}$ and $\mu(x_{1..n})\le 1$, and the last equality follows from~(\ref{eq:xt}).

From~(\ref{eq:12}), (\ref{eq:1}) and~(\ref{eq:2}) we conclude ${1\over n}d_n(\nu,\mu)\to0$.

{\em Step r: the regularizer $\gamma$}. It remains to show that the  i.i.d.\ regularizer $\gamma$ in the definition of $\nu$~(\ref{eq:nu}), can be replaced by a convex combination of a countably many elements from $\C$.
Indeed, for each $n\in\N$, denote
$$
A_n:=\{x_{1..n}\in \X^n: \exists\mu\in\C\ \mu(x_{1..n})\ne0\},
$$ and let for each $x_{1..n}\in \X^n$ the measure $\mu_{x_{1..n}}$ be any measure from $\C$ such that $\mu_{x_{1..n}}(x_{1..n})\ge{1\over2}\sup_{\mu\in\C}\mu(x_{1..n})$.
Define 
$$
 \gamma_n'(x'_{1..n}):={1\over |A_n|}\sum_{x_{1..n}\in A_n}\mu_{x_{1..n}}(x'_{1..n}),
$$ for each
$x'_{1..n}\in A^n$, $n\in\N$, and let  $\gamma':=\sum_{k\in\N}w_k\gamma'_k$. 
For every $\mu\in\C$ we have 
$$
\gamma'(x_{1..n})\ge w_n|A_n|^{-1} \mu_{x_{1..n}}(x_{1..n})\ge{1\over2} w_n |\X|^{-n} \mu(x_{1..n})
$$ for
every $n\in\N$ and every $x_{1..n}\in A_n$, which clearly suffices to establish the bound $II=o(n)$ as in~(\ref{eq:2}).
\end{proof}

\subsection*{Acknowledgements}
Some of the results have appeared  \citep{Ryabko:10pqout} in the proceedings of  COLT'10.
The author is grateful to the anonymous reviewers for their constructive comments on the paper.
This research was partially supported by the French Ministry of Higher Education and Research, Nord-Pas-de-Calais Regional Council and FEDER through
CPER 2007-2013, ANR projects EXPLO-RA (ANR-08-COSI-004) and Lampada (ANR-09-EMER-007), by 
the European Community's Seventh Framework Programme (FP7/2007-2013) under grant agreement  231495 (project CompLACS), and by
Pascal-2.


\begin{thebibliography}{19}
\providecommand{\natexlab}[1]{#1}
\providecommand{\url}[1]{\texttt{#1}}
\expandafter\ifx\csname urlstyle\endcsname\relax
  \providecommand{\doi}[1]{doi: #1}\else
  \providecommand{\doi}{doi: \begingroup \urlstyle{rm}\Url}\fi

\bibitem[Billingsley(1965)]{Billingsley:65}
P.~Billingsley.
\newblock \emph{Ergodic theory and information}.
\newblock Wiley, New York, 1965.

\bibitem[Blackwell and Dubins(1962)]{Blackwell:62}
D.~Blackwell and L.~Dubins.
\newblock Merging of opinions with increasing information.
\newblock \emph{Annals of Mathematical Statistics}, 33:\penalty0 882--887,
  1962.

\bibitem[Cesa-Bianchi and Lugosi(1999)]{Cesa:99}
N.~Cesa-Bianchi and G.~Lugosi.
\newblock On prediction of individual sequences.
\newblock \emph{Annals of Statistics}, 27:\penalty0 1865--1895, 1999.

\bibitem[Cesa-Bianchi and Lugosi(2006)]{Cesa:06}
N.~Cesa-Bianchi and G.~Lugosi.
\newblock \emph{Prediction, Learning, and Games}.
\newblock Cambridge University Press, 2006.
\newblock ISBN 0521841089.

\bibitem[Dawid(1992)]{Dawid:92}
A.~P. Dawid.
\newblock Prequential data analysis.
\newblock \emph{Lecture Notes-Monograph Series}, 17:\penalty0 113--126, 1992.

\bibitem[Kalai and Lehrer(1994)]{Kalai:94}
E.~Kalai and E.~Lehrer.
\newblock Weak and strong merging of opinions.
\newblock \emph{Journal of Mathematical Economics}, 23:\penalty0 73--86, 1994.

\bibitem[Krichevsky(1993)]{Krichevsky:93}
R.~Krichevsky.
\newblock \emph{Universal Compression and Retrival}.
\newblock Kluwer Academic Publishers, 1993.

\bibitem[Plesner and Rokhlin(1946)]{Plesner:46}
A.I. Plesner and V.A. Rokhlin.
\newblock Spectral theory of linear operators, {II}.
\newblock \emph{Uspekhi Matematicheskikh Nauk}, 1:\penalty0 71--191, 1946.

\bibitem[Ryabko(1984)]{BRyabko:84}
B.~Ryabko.
\newblock Twice-universal coding.
\newblock \emph{Problems of Information Transmission}, 3:\penalty0 173--177,
  1984.

\bibitem[Ryabko(1986)]{BRyabko:86}
B.~Ryabko.
\newblock Noiseless coding of combinatorial sources, {H}ausdorff dimension, and
  {K}olmogorov complexity.
\newblock \emph{Problems of Information Transmission}, 22:\penalty0 16--26,
  1986.

\bibitem[Ryabko(1988)]{BRyabko:88}
B.~Ryabko.
\newblock Prediction of random sequences and universal coding.
\newblock \emph{Problems of Information Transmission}, 24:\penalty0 87--96,
  1988.

\bibitem[Ryabko(2009)]{Ryabko:09pq3}
D.~Ryabko.
\newblock Characterizing predictable classes of processes.
\newblock In A.~Ng J.~Bilmes, editor, \emph{Proceedings of the 25th Conference
  on Uncertainty in Artificial Intelligence ({UAI'09})}, Montreal, Canada,
  2009.

\bibitem[Ryabko(2010{\natexlab{a}})]{Ryabko:10pq3+}
D.~Ryabko.
\newblock On finding predictors for arbitrary families of processes.
\newblock \emph{Journal of Machine Learning Research}, 11:\penalty0 581--602,
  2010{\natexlab{a}}.

\bibitem[Ryabko(2010{\natexlab{b}})]{Ryabko:10pqout}
D.~Ryabko.
\newblock Sequence prediction in realizable and non-realizable cases.
\newblock In \emph{Proc. the 23rd Conference on Learning Theory (COLT 2010)},
  pages 119--131, Haifa, Israel, 2010{\natexlab{b}}.

\bibitem[Ryabko and Hutter(2007)]{Ryabko:07pqisit}
D.~Ryabko and M.~Hutter.
\newblock On sequence prediction for arbitrary measures.
\newblock In \emph{Proc. 2007 IEEE International Symposium on Information
  Theory}, pages 2346--2350, Nice, France, 2007. IEEE.

\bibitem[Ryabko and Hutter(2008)]{Ryabko:08pqaml}
D.~Ryabko and M.~Hutter.
\newblock Predicting non-stationary processes.
\newblock \emph{Applied Mathematics Letters}, 21\penalty0 (5):\penalty0
  477--482, 2008.

\bibitem[Shiryaev(1996)]{Shiryaev:96}
A.~N. Shiryaev.
\newblock \emph{Probability}.
\newblock Springer, 1996.

\bibitem[Solomonoff(1978)]{Solomonoff:78}
R.~J. Solomonoff.
\newblock Complexity-based induction systems: comparisons and convergence
  theorems.
\newblock \emph{IEEE Trans. Information Theory}, IT-24:\penalty0 422--432,
  1978.

\bibitem[Ziv and Lempel(1978)]{Ziv:78}
J.~Ziv and A.~Lempel.
\newblock Compression of individual sequences via variable-rate coding.
\newblock \emph{IEEE Transactions on Information Theory}, 24:\penalty0
  530--536, 1978.

\end{thebibliography}

\end{document}